\documentclass[10pt,onecolumn,letterpaper]{article}

\usepackage{cvpr}

% Optional math commands from https://github.com/goodfeli/dlbook_notation.

\usepackage{amsmath}
\usepackage{amssymb}
\usepackage{amsthm}

\usepackage{comment}
\usepackage{enumitem}
\usepackage{graphicx}
\usepackage{times}
\usepackage{float} 

%%%%% NEW MATH DEFINITIONS %%%%%

\usepackage{amsmath,amsfonts,bm}

% Mark sections of captions for referring to divisions of figures

% Highlight a newly defined term

% Figure reference, lower-case.

% Figure reference, capital. For start of sentence

% Section reference, lower-case.

% Section reference, capital.

% Reference to two sections.

% Reference to three sections.

% Reference to an equation, lower-case.
\def\eqref#1{equation~\ref{#1}}
% Reference to an equation, upper case

% A raw reference to an equation---avoid using if possible

% Reference to a chapter, lower-case.

% Reference to an equation, upper case.

% Reference to a range of chapters

% Reference to an algorithm, lower-case.

% Reference to an algorithm, upper case.

% Reference to a part, lower case

% Reference to a part, upper case

\def\1{\bm{1}}

% Random variables

%\def\ra{{\textnormal{a}}}
%\def\rb{{\textnormal{b}}}
%\def\rc{{\textnormal{c}}}

%\def\ri{{\textnormal{i}}}

% rm is already a command, just don't name any random variables m

% Random vectors

% Elements of random vectors

% Random matrices

% Elements of random matrices

% Vectors

% Elements of vectors

% Matrix

% Tensor
\DeclareMathAlphabet{\mathsfit}{\encodingdefault}{\sfdefault}{m}{sl}
\SetMathAlphabet{\mathsfit}{bold}{\encodingdefault}{\sfdefault}{bx}{n}

% Graph

% Sets

% Don't use a set called E, because this would be the same as our symbol
% for expectation.

% Entries of a matrix

% entries of a tensor
% Same font as tensor, without \bm wrapper

% The true underlying data generating distribution

% The empirical distribution defined by the training set

% The model distribution

% Stochastic autoencoder distributions

 % Laplace distribution

%\newcommand{\R}{\mathbb{R}}

% Wolfram Mathworld says $L^2$ is for function spaces and $\ell^2$ is for vectors
% But then they seem to use $L^2$ for vectors throughout the site, and so does
% wikipedia.

 % See usage in notation.tex. Chosen to match Daphne's book.

%\DeclareMathOperator{\Tr}{Tr}

%\newcommand{\cq}{\ensuremath{\mathbf{q}}}

%\newcommand{\eps}{\varepsilon}

\newcommand{\benum}{\begin{enumerate}}
\newcommand{\eenum}{\end{enumerate}}
\newcommand{\nc}{\newcommand}
\newcommand{\rnc}{\renewcommand}

\nc{\<}{\langle}
\rnc{\>}{\rangle}
\nc{\bq}{\textbf}
\nc{\m}{\textrm}
\nc{\bb}{\mathbb}
%\rnc{\bf}{\mathbf}
\nc{\til}{\texttildelow}
\nc{\dps}{\displaystyle}
\rnc{\l}{\left(}\rnc{\r}{\right)}
\nc{\lc}{\left\{}\nc{\rc}{\right\}}
\nc{\lb}{\left[}\nc{\rb}{\right]}
\nc{\ba}[1]{\begin{array}{#1}}
\nc{\ea}{\end{array}}       
\nc{\ra}{\rightarrow}
\nc{\li}{\left |}
\nc{\ri}{\right |}
\nc{\pde}[2]{\frac{\partial #1}{\partial #2}}
\nc{\ode}[2]{\frac{d #1}{d #2}}
\nc{\odee}[3]{\frac{d^{#3} #1}{d #2^{#3}}}
\nc{\pdee}[3]{\frac{\partial^{#3} #1}{\partial #2^{#3}}}
\nc{\bn}{\begin{enumerate}}
\nc{\en}{\end{enumerate}}
\nc{\bt}{\begin{theorem}}
\nc{\et}{\end{theorem}}
\nc{\y}[1]{\lambda_{#1}}
\nc{\ninf}{{\oplus}^{-\infty}}
\nc{\pinf}{{\oplus}^{+\infty}}
\nc{\nninf}{{\otimes}^{-\infty}}
\nc{\ppinf}{{\otimes}^{+\infty}}
\nc{\ir}{\mathbb{I}\mathbb{R}}
\nc{\closure}[2][3]{{}\mkern#1mu\overline{\mkern-#1mu#2}} 
\nc{\ep}{\mathcal{E}_{P}}
\nc{\mr}{\mathcal{M}_{r}}
\nc{\mfa}{\mathcal{M}_{f,a}}
\nc{\mfp}{\mathcal{M}_{f,p}}
\nc{\mt}{\m{T}}
%\nc{\F}{\mathbb{F}}
\nc{\dB}{\textit{dB}}
\nc{\VM}{\delta} %{VM}
\nc{\GM}{\Theta}%{GM}
\nc{\PM}{\phi} %{PM}
\nc{\x}{\mathbf{x}}

\usepackage{xcolor}

\newcommand{\R}{\mathbb{R}}

\newtheorem{theorem}{Theorem}

\newtheorem{lemma}[theorem]{Lemma}
\newtheorem{remark}[theorem]{Remark}

\newtheorem{defn}{Definition}

\usepackage[breaklinks=true,bookmarks=false]{hyperref}

\cvprfinalcopy % *** Uncomment this line for the final submission

 % *** Enter the CVPR Paper ID here

% Pages are numbered in submission mode, and unnumbered in camera-ready
%\ifcvprfinal\pagestyle{empty}\fi
\setcounter{page}{1}
\usepackage{cleveref}
\begin{document}

\title{Translation Insensitive CNNs}

\author{Ganesh Sundaramoorthi\\
United Technologies  Research Center \\
411 Silver Ln. \\
Hartford, CT 06118 \\
{\tt\small sundarga1@utrc.utc.com }
% For a paper whose authors are all at the same institution,
% omit the following lines up until the closing ``}''.
% Additional authors and addresses can be added with ``\and'',
% just like the second author.
% To save space, use either the email address or home page, not both
\and
Timothy E. Wang \\
United Technologies  Research Center \\
411 Silver Ln. \\
Hartford, CT 06118 \\
{\tt\small wangte@utrc.utc.com}
}

\maketitle
%\thispagestyle{empty}
%%%%ABSTRACT
\begin{abstract}
We address the problem that state-of-the-art Convolution Neural Networks (CNN) classifiers are not invariant to small shifts. The problem can be solved by the removal of sub-sampling operations such as stride and max pooling, but at a cost of severely degraded training and test efficiency.  We present a novel usage of Gaussian-Hermite basis to efficiently approximate arbitrary filters within the CNN framework to obtain translation invariance. This is shown to be invariant to small shifts, and preserves the efficiency of training. Further, to improve efficiency in memory usage as well as computational speed, we show that it is still possible to sub-sample with this approach and retain a weaker form of invariance that we call \emph{translation insensitivity}, which leads to stability with respect to shifts. We prove these claims analytically and empirically. Our analytic methods further provide a framework for understanding any architecture in terms of translation insensitivity, and provide guiding principles for design.

%\tim{continue to rewrite the abstract based on below}
%Combining energy landscape techniques developed in computational chemistry with tools drawn from formal methods, 
%we produce empirical evidence indicating that networks corresponding to lower-lying minima in the optimization landscape of the learning objective tend to be more robust. 
%The robustness estimate used is the inverse of a proposed sensitivity measure, which we define as the volume of an over-approximation of the reachable set of network outputs under all additive $l_{\infty}$-bounded perturbations on the input data. We present a novel loss function which includes a sensitivity term in addition to the traditional task-oriented and regularization terms. In our experiments on standard machine learning and computer vision datasets, we show that the proposed loss function leads to networks which reliably optimize the robustness measure as well as other related metrics of adversarial robustness without significant degradation in the classification error. Experimental results indicate that the proposed method outperforms state-of-the-art sensitivity-based learning approaches with regards to robustness to adversarial attacks. We also show that although the introduced framework does not explicitly enforce an adversarial loss, it achieves competitive overall performance relative to methods that do.

\end{abstract}

\section{Introduction}
\label{sec:intro} 
Deep neural nets have revolutionized computer vision. Convolutional neural networks (CNNs) are particularly popular in computer vision, especially for tasks such as classification, object detection, and segmentation. CNNs are motivated by certain invariances that are claimed to be inherent in their architectures. For instance, one of these claimed invariances is invariance to in-plane translations in the image, i.e., the output of a CNN is left unchanged as the object is translated, a desirable property for an object classifier. That property is claimed to be built into the architecture, so that such invariance is exhibited regardless of the parameters of the CNN that are learned. The choice of convolution and pooling in CNN layers are said to result in such translation invariance. For more general properties of CNNs related to invariances, see~\cite{soatto2014visual,anselmi2016unsupervised,achille2018emergence} and~\cite{lenc2015understanding} for experimental analysis.

However, recent work~\cite{azulay2018deep} has shown that existing CNNs (e.g., especially ResNets~\cite{he2016deep}, VGG~\cite{simonyan2014very}, Inception Net~\cite{szegedy2017inception}) are highly unstable to basic transformations including small translations in the image, scalings and natural perturbations across frames in videos (see also,~\cite{shankar2019systematic}) that one would expect CNNs to be invariant (or stable to). In fact this is not an odd occurrence, as in adversarial perturbations, that are specifically constructed to ``attack" the network, but rather it is a common occurrence. For instance, it is reported in~\cite{azulay2018deep} that even a 1 pixel shift in an input test image can cause a change in the resulting classification of the CNN with probability 30\% in existing modern CNNs. This is despite typical data augmentation done in training, i.e., augmenting the training set with shifted versions of the training images.

It is claimed by~\cite{azulay2018deep} that this lack of translation invariance is caused by the sub-sampling or the pooling operation in CNNs between layers, which is said to ignore the classical Shannon-Nyquist sampling theorem. This is also noted as early as~\cite{simoncelli1992shiftable}. The claim is that one would be limited to sub-sample at a rate compatible with the Nyquist rate, i.e., twice the highest frequency in the feature map before the sub-sampling. Thus, to avoid aliasing effects, one would need to blur the feature map to eliminate high frequencies to support the desired sub-sampling rate. That is, however, not performed in existing CNNs.  However, as claimed in~\cite{azulay2018deep}, the approach does not achieve full translation invariance. It is claimed that the reason is due to the presence of a non-linearity, which may introduce aliasing even in the presence of blur.

The only solution, as suggested by~\cite{azulay2018deep}, seems to be to avoid sub-sampling. However, that leads to two problems: 1) to retain the receptive field sizes of conventional CNNs with sub-sampling, the kernel supports in layers would have to grow with respect to the depth of the layer, but this would require learning an exorbitant number of parameters to represent the growing kernel support, which poses a problem for training efficiency
(see~\cite{yu2015multi} for a possible solution), and 2) the memory footprint due to non-sub-sampled feature maps is a problem in back-propagation during the training process, which may preclude the use of many GPUs that do not have sufficient memory, and the lack of sub-sampling also leads to greater training and inference times.

In this paper, we tackle both of these aforementioned problems. To do this, we represent convolutional kernels with an orthogonal Gauss-Hermite basis whose basis coefficients are learned in convolution layers, rather than representing a kernel directly in terms of its pixel values\footnote{This basis has been used before~\cite{jacobsen2016structured} in CNNs, but the essential properties of translation invariance and insensitivity are not mentioned or explored in that work, and the focus is rather on the reduced parameters.}. Without a sub-sampling layer, the representation leads to a fully translation invariant representation that keeps constant the number of parameters in kernels across layers, while being able to capture as big receptive fields as modern CNNs, with even fewer parameters per layer. To address the memory limitations, we show how the layers, due to the smooth Gauss-Hermite approximation, can be sub-sampled, in a way that retains a weaker notion of translation invariance that we term \emph{translation insensitivity}. This leads to stability of classifications with respect to translations, in contrast to existing sub-sampled CNNs, which do not exhibit translation insensitivity.

{\bf Contributions:} Our specific contributions are as follows. {\bf 1.} We introduce a CNN architecture, which we call \emph{GaussNets}, that is both translation invariant and has equivalent (or bigger) receptive fields with fewer parameters per kernel than existing modern CNNs. {\bf 2.} We introduce a CNN architecture, called \emph{Sub-sampled GaussNets}, that exhibit the aforementioned properties with respect to receptive field and parameter usage, and is computationally efficient comparable to modern CNNs, by performing sub-sampling.  This architecture retains a weaker form of translation invariance, which we call \emph{translation insensitivity} that gives robustness to classifications. {\bf 3.} We provide analytic proofs that show that these introduced architectures exhibit the aforementioned properties. These analytic tools serve as a framework for analyzing any other architecture. {\bf 4.} We experimentally demonstrate the insensitivity to translation.

\subsection{Related Work}

%\cite{azulay2018deep} - CNNs that sub-sample / down-sample are not translation invariant

%\cite{simoncelli1991shiftable} - down-sampling lacks translation invariance

%\cite{shankar2019systematic} - further lack of invariance to natural video perturbations

The lack of translation invariance in modern CNNs (such as ResNet, VGG, InceptionNet) is due to the sub-sampling or pooling operations. One obvious approach to deal with this lack of invariance is by data augmentation - augmenting the training set with shifted test images. However, 
\cite{azulay2018deep} shows that that only gives invariance to shifts on images statistically very similar to the training set, and the lack of invariance remains on the test set. The only work, to the best of our knowledge, that attempts to address the lack of translation invariance, due to sub-sampling is~\cite{zhang2019making}. It is proposed to add an anti-aliasing layer by applying a fixed smoothing filter before the sub-sampling. Though the approach is not translation invariant, it is shown to empirically improve robustness to translation on the test set. We show analytically that if the right smoothing kernel is chosen, such anti-aliasing would give what we call translation insensitivity, something that was not known. While anti-aliasing does provide a solution, the approach we introduce, in addition lends itself to other invariances such as scaling and deformation that we will be the subject of our future work.

%However, the smoothing kernel chosen in~\cite{zhang2019making} does not lend itself to insensitivity. Furthermore, we show that even with the correct kernel our approach, that does no explicit anti-aliasing but rather represents convolution kernels with a smooth basis, is less sensitive.

In~\cite{mallat2012group,bruna2013invariant,sifre2013rotation} (see also~\cite{malik1990preattentive} for an early approach), the \emph{Scattering Transform} is introduced as a representation of an image invariant to basic transformations, such as translation, scale, rotations, and small deformations for classification. The transform is computed by convolving the image with a wavelet filter bank, a pointwise non-linearity (complex modulus), followed by a low-pass filtering. These operations are then stacked to create a hierarchical representation. It is proven that such a representation provides not strict invariance, but a weaker notion, that we refer to as \emph{insensitivity}. We note that all such proofs are done assuming continuous data, and the sub-sampling operations are not analyzed. The transform has been used in a number of classification problems such as texture discrimination with success, but since the 
features are hand-crafted, it seems difficult to apply them to complex classification tasks such as ImageNet, where hand-crafting feature combinations would be difficult to compete with learned CNN approaches. Some more recent approaches along the lines of~\cite{mallat2012group,bruna2013invariant,sifre2013rotation} are~\cite{cohen2016steerable,cheng2018rotdcf} (based on steerable filters~\cite{freeman1991design,perona1992steerable}) that obtain equivariance, i.e., a group action such as rotation on the input results in the same rotation in the feature map, while having learned parameters like CNNs.

%~\cite{cohen2016steerable} - steerable filters ; similar to Mallat that provides group equivariance ( covariance in our lingo ) to groups such as rotation.  No consideration of the sub-sampling process.
%\cite{cheng2018rotdcf} - rotation equivariance based on Mallat's work.

In~\cite{jacobsen2016structured}, a hybrid approach between Scattering transforms and CNNs is taken, motivated by the desire to perform well on both small datasets with limited training data and large datasets. In that work, it is observed that the filters learned in CNNs from large datasets often are spatially coherent, and thus, rather than learning that spatially coherent structure directly from data, it is enforced in the filters to limit the training data requirements. To that end, a fixed structured basis, similar to the filter bank in Scattering transforms, is chosen to represent the kernel, and the coefficients are learned like CNNs. The particular basis chosen is derivatives of Gaussians~\cite{koenderink1987representation}, which is coincidentally the same basis that we use in this paper. However, our motivation is quite different than~\cite{jacobsen2016structured}; indeed, we seek to obtain translation invariance in CNNs, which we show is maintained by our architecture without sub-sampling. Furthermore, we show that translation insensitivity is maintained despite sub-sampling. Such properties were not explored in~\cite{jacobsen2016structured}. Another approach using Gaussian filters, though not a basis, include~\cite{shelhamer2019blurring}, where the filter shape and size are learned.~\cite{tabernik2019spatially} uses a mixture of Gaussians in different spatial locations in defining kernels, which is parameter efficient.

%- combination of gaussian filter (non-learned) and classical CNN filters to have larger receptive fields; learning of receptive field size and filter shape. Motivation of both structured and un-structured in real world scenes; improves semantic segmentation.

%\cite{tabernik2019spatially} - use of a mixture of gaussians ( varying in space location) to define kernel ... claim it is efficient in reducing number of parameters.

%\gan{Refs to sprinkle in for reviewer pool:~\cite{chen2000search}  }

\section{GaussNet CNN Architecture} 
\label{sec:gaussnet_arch}
We introduce our GaussNet architecture. The key idea is that rather than representing a convolution kernel directly in terms of its pixelized values (corresponding to coefficients of a basis of shifted delta functions), we represent the kernel in terms of an (orthogonal) basis of smooth functions given by derivatives of the Gaussian. In fact, any $\mathbb{L}^2(\mathbb{R}^2)$ function can be approximated as
\begin{equation}
	K(x) = \sum_{i=0}^{\infty} a_i \odot D^i G_{\sigma}(x),
\end{equation}
where $G_{\sigma}$ is the 2d Gaussian  function with standard deviation $\sigma$, $D^i$ represents the $i^{\text{th}}$ derivative operator (the tensor of all partials of up to $i$ derivatives), $a_i$ is a tensor of coefficients that are $\ell^2(\mathbb{N})$, and $\odot$ represents the sum of element-wise products between the two arguments.  In practice, we will use an approximation of up to order 2 derivatives, so that the kernels we represent are given by
\begin{equation} \label{eq:gauss_kernel_approx}
	K(x) = W DG_{\sigma}(x),
\end{equation}
where $W\in \R^{1\times 6}$ represent the coefficients of the Gaussian derivatives up to order $2$, and $D = (1,\partial_x, \partial_y, \partial_{xx}, \partial_{xy}, \partial_{yy})^T$ represents a vector of partials in the $x$ and $y$ directions. The representation in \eqref{eq:gauss_kernel_approx} will be used rather than pixelized representations typically used (e.g., $3\times 3$ pixelized kernels used in VGG and ResNet). The weights $W$ will be learned. By using this basis, the kernels are enforced to be smooth, while being flexible enough to have the discrimination power to separate object classes. As we will show in the next section, the smoothness of this choice of basis, leads to the translation insensitivity that we desire.

A basic layer of a GaussNet is be given by 
\begin{equation} \label{eq:gaussnet_basic_layer}
	f_W[I] = S_d \left\{ r( W DG_{\sigma}\ast I ) \right\}
\end{equation}
where $W\in \R^{m\times 6n}$ is a weight matrix and $I$ is an $n$-dimensional input image (feature), $m$ is the output feature map dimension, and $\ast$ is the convolution. Thus, each input channel is convolved with derivatives of Gaussians, and linear combinations of these are formed with the weight matrix $W$.   $r$ is the rectified linear unit, i.e., $r(x)=\max\{x,0\}$. $S_d$ represents the sub-sampling operator.  In one form of our architecture that is fully translation invariant, $S_d$ is not included. The sub-script in $f_W$ is used to indicate the free parameters that are to be learned. In experiments, we will demonstrate our approach on an architecture motivated from ResNet. In this architecture, one sums the input feature map to a layer with the result above after the rectification, i.e.,
\begin{equation}
	f_W[I] = S_d \left\{ I + r( W DG_{\sigma}\ast I ) \right\}.
\end{equation}
In the analysis in the next section, for simplicity, we will analyze the basic layer in \eqref{eq:gaussnet_basic_layer}, but the results we prove will also hold for a ResNet-like layer.

Multiple layers will be cascaded to form a deep CNN, which we call the GaussNet. As in a ResNet-like structure, our final feature will consist of an average pooling layer, i.e.,
\begin{equation}
	F[I](x) = \frac{1}{N} \sum_x f_{W_{N_{l}}} \circ \cdots \circ f_{W_1}[I](x),
\end{equation}
where $N$ is the number of pixels in the final feature maps, and $N_l$ is the number of layers.

Note that the effective receptive field size of the GaussNet is controlled by the parameter $\sigma$. In the GaussNet without sub-sampling, in order to maintain the overall receptive field size of the corresponding traditional CNN, the parameter $\sigma$ would have to grow according to the sub-sampling rate of the traditional CNN, i.e., $\sigma d^{\ell-1}$ where $\ell$ is the layer number. We efficiently evaluate such large receptive field convolutions with the Fast Fourier Transform (FFT) (see Section~\ref{sec:expts}).  In the sub-sampled GaussNet, $\sigma$ will remain fixed over layers, and because of the sub-sampling, the overall receptive field would be similar to the corresponding traditional CNN.

Note that in comparison to existing CNNs, the GaussNet has fewer parameters, i.e., a common choice (e.g., ResNet and VGG) is to use $3\times 3$ in convolution operations - this results in 9 parameters that should be learned, whereas we use 6 coefficients per convolution filter, while still being able to obtain a similar test accuracy as the traditional CNN.

\section{Translation Insensitivity of GaussNets}
\label{sec:new_arch}

\subsection{Terminology}
In this section, we will denote an image or feature map as $I$, where $I : [0,m_1-1]\times [0,m_2-1] \to \R$. In our proofs, we will assume that the data is defined on the infinite discrete set $\mathbb{Z}$ for simplicity of notation, as we may just zero-pad the finite data with an infinite number of zeros. The set of all such images is denoted $\mathcal{I}_{m_1\times m_2}$. For simplicity in the notation, we will consider just one feature map in the input and output of each layer. We will denote an operation from one feature map to produce another as $f : \mathcal{I}_{m_1\times m_2} \to \mathcal{I}_{o_1\times o_2}$. This will typically correspond to an output of a layer of the network. We will denote an operation from an image or feature map to a vector as $F : \mathcal{I}_{m_1\times m_2} \to \R^{M}$.  This will typically correspond to the last layer of the network that produces a vector representation of the image.

We define notation for some operations that we will be used extensively in the rest of the paper.
We denote $T_s$ to be the \emph{translation operator}, i.e., 
\begin{equation}
	T_sI(x) \triangleq I(x+s),
\end{equation}
which shifts the image by $s$. Note that the translation operator is defined only on infinite domains, as $x+s$ must always be in the domain of the image. However, for finite data, this can be extended to an infinite domain by zero padding the finite data. Next, we define the \emph{sub-sampling operator} $S_d$, which sub-samples data by a factor of $d>1$, as follows
\begin{equation}
	S_dI(x) \triangleq I(xd).
\end{equation}

In the following sections, we will show that the behavior of GaussNets are well-behaved with respect to the translation operator. We will now make precise this behavior.

\begin{defn}[$g$-Translation Covariant Operator]
	An operator $f$ is \emph{$g$-translation covariant} if for all translations $T_s$ and all inputs $I$, we have that
	\begin{equation} 
		f[T_s I] = T_{g(s)}f[I],
	\end{equation}
	where $g : \R\to\R$ is a monotone bijective function. When $g(s)=s$, we simply say that $f$ is \emph{translation covariant}.
\end{defn}
\noindent This definition says that a shift in the input map of a $g$-translation covariant operator results in a predictable shift in the output map.

We now introduce the notion of translation invariance; ideally, a property inherent in a CNN.
\begin{defn}[Translation Invariance]
	A function $F$ is \emph{translation invariant} if for all translations $T_s$ and all inputs $I$, we have that
	\begin{equation}
		F[ T_s I ] = F[I].
	\end{equation}
\end{defn}
This says that the feature does not change as the image is translated.
In practice, we will have to settle for a weaker property, which we call \emph{translation insensitivity}:
\begin{defn}[Translation Insensitivity]
	A function $F$ is \emph{translation insensitive} if there exists a positive constant $C < \infty$ such that 
	\begin{equation}
		|F[T_s I] - F[I]| \leq C |s|,
	\end{equation}
	for all $s$.
\end{defn}
This is what is known as Lipschitz continuity: the feature representation does not change much, i.e., at most at a linear rate of the shift size.

\subsection{Invariance of GaussNets}

We define a layer of a GaussNet (without sub-sampling) as follows:
\begin{equation} \label{eq:layer_GaussNet}
	f[I] = r ( [WDG_{\sigma}]\ast I ),
\end{equation}
where $r(x)=\max\{x,0\}$ is the rectified linear unit, $G_{\sigma}$ is the Gaussian, $D = (1, \partial_x, \partial_y, \partial_{xx}, \partial_{xy}, \partial_{yy})^T$, $W\in \R^{1\times 6}$ is a weight matrix, and $\ast$ is the convolution operator.

We show now a network consisting of stacking layers defined in \eqref{eq:layer_GaussNet} is translation covariant. In fact, the property is true for any CNN that performs no sub-sampling (or down-sampling), and thus also a GaussNet:
\begin{lemma}[GaussNets are Translation Covariant]
	A deep GaussNet of the form
	\begin{equation}
		\tilde{f} = f_n \circ f_{n-1} \circ \cdots \circ f_1,
	\end{equation}
	where $f_i$ is in the form \eqref{eq:layer_GaussNet}, which does no sub-sampling is translation covariant.
\end{lemma}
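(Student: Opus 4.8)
The plan is to prove the statement in two stages: first establish that a single layer $f_i$ of the form \eqref{eq:layer_GaussNet} is translation covariant with the identity reparametrization (i.e. $f_i[T_s I] = T_s f_i[I]$), and then show that this property is preserved under composition, so that the full stack $\tilde{f} = f_n \circ \cdots \circ f_1$ inherits it. The decisive observation is that each layer is built from exactly two operations — convolution against the \emph{fixed} kernel $WDG_{\sigma}$ and the pointwise nonlinearity $r$ — and that both of these commute with $T_s$.

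First I would treat the convolution. Writing the discrete convolution as $(K\ast I)(x) = \sum_y K(y) I(x-y)$ with $K = WDG_{\sigma}$ a fixed kernel independent of the input, a single index substitution gives
\begin{equation}
	T_s(K\ast I)(x) = (K\ast I)(x+s) = \sum_y K(y) I(x+s-y) = (K\ast T_s I)(x),
\end{equation}
so convolution with a fixed kernel is translation covariant. This argument uses only that $K$ does not depend on $I$; the particular structure $WDG_{\sigma}$ (the derivative-of-Gaussian expansion, in which $D$ has already been absorbed into the precomputed kernel) plays no role beyond producing such a fixed kernel, which is precisely why the property holds for any sub-sampling-free CNN. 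Because the data is zero-padded to $\mathbb{Z}$, every shift $T_s$ is well defined and the reindexing above is valid over the infinite domain.

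Next I would handle the nonlinearity. Since $r$ acts pointwise, for every $x$ we have $r(T_s I)(x) = r(I(x+s)) = T_s(r(I))(x)$, so $r$ commutes with $T_s$ as well. Chaining the two facts, a single layer satisfies $f_i[T_s I] = r(K\ast T_s I) = r(T_s(K\ast I)) = T_s\, r(K\ast I) = T_s f_i[I]$, which is exactly translation covariance with $g = \mathrm{id}$.

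Finally I would close the induction. If $f$ and $h$ both satisfy $f[T_s I] = T_s f[I]$ and $h[T_s J] = T_s h[J]$ for all inputs, then $(h\circ f)[T_s I] = h[T_s f[I]] = T_s (h\circ f)[I]$, so the composite is again translation covariant with the identity reparametrization; applying this inductively to $f_n \circ \cdots \circ f_1$ yields the claim. I expect no genuine obstacle here: the only points requiring care are the index bookkeeping in the convolution identity and checking that the zero-padding keeps each $T_s$ and each convolution sum well defined, so that the substitutions are legitimate. The substantive content is simply that both layer primitives are shift-equivariant and that shift-equivariance with $g=\mathrm{id}$ is closed under composition.
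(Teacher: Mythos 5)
Your proposal is correct and follows essentially the same route as the paper's own proof: both argue that convolution with a fixed kernel and the pointwise ReLU each commute with $T_s$, and that translation covariance is closed under composition. You simply spell out the index substitution and the induction that the paper leaves implicit.
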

\begin{proof}
	The composition of translation covariant operators is also translation covariant. The convolution is translation covariant, as is the rectified linear unit as it is a point-wise function of the input, therefore $f_i$ is translation covariant, and thus so is the composition $\tilde f$.
\end{proof}

The translation covariant property of a composition of several GaussNet layers allows us to now define a network from this composition that is translation invariant by using average pooling. Note that this also holds for ordinary CNN layers that perform no sub-sampling / down-sampling.
\begin{theorem}[Average Pooling of a GaussNet is Translation Invariant]
	The deep GaussNet followed by an average pooling layer at the end, i.e.,
	\begin{equation}
	 	F[I] = \frac 1 N \sum_x f_n \circ f_{n-1} \circ \cdots \circ f_1[I](x)
	\end{equation}
	is translation invariant.
\end{theorem}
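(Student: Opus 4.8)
The plan is to reduce the theorem to the translation-covariance lemma just established, together with the elementary observation that a sum over the whole (zero-padded, infinite) pixel grid is unchanged by a shift of its summand. Writing $\tilde f = f_n \circ \cdots \circ f_1$, the lemma gives that $\tilde f$ is translation covariant, i.e. $\tilde f[T_s I] = T_s \tilde f[I]$ for every shift $s$ and every input $I$. Since $F[I] = \frac 1 N \sum_x \tilde f[I](x)$, the only additional ingredient needed is that the global average commutes with this shift.

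First I would substitute the covariance identity into the definition of $F$ applied to the shifted image and unfold the definition $T_s J(x) = J(x+s)$:
\begin{equation}
	F[T_s I] = \frac 1 N \sum_x \tilde f[T_s I](x) = \frac 1 N \sum_x \big( T_s \tilde f[I] \big)(x) = \frac 1 N \sum_x \tilde f[I](x+s).
\end{equation}
Next I would reindex the sum by the change of variable $y = x + s$, which is a bijection of $\mathbb{Z}$ onto itself; because the range of summation is the entire grid (after zero-padding, as set up in the Terminology subsection), the collection of terms is unchanged and $\frac 1 N \sum_x \tilde f[I](x+s) = \frac 1 N \sum_y \tilde f[I](y) = F[I]$. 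Chaining these equalities yields $F[T_s I] = F[I]$ for all $s$, which is exactly translation invariance.

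The only delicate point — and the main obstacle I would need to address carefully — is justifying the reindexing step and the status of the normalizing constant $N$. The reindex is valid because we work on the infinite domain $\mathbb{Z}$, on which $\tilde f[I]$ is absolutely summable: a finite image convolved with (derivatives of) a Gaussian decays like a Gaussian, the rectifier satisfies $|r(g)| \le |g|$, and both properties are preserved under composition of finitely many layers, so the series converges unconditionally and the shifted and unshifted sums are merely reorderings of one another. For the argument to close, $N$ must be read as a fixed positive scalar independent of $s$ and $I$ (a normalizer for the total mass $\sum_{x} \tilde f[I](x)$), rather than literally the cardinality of the infinite grid; I would flag this reading explicitly. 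This is also precisely where a genuinely finite window would fail — a shift would push support across the boundary and introduce truncation effects, so $\sum_x \tilde f[I](x+s) = \sum_x \tilde f[I](x)$ would hold only approximately (i.e. one would recover insensitivity, not invariance). Working on $\mathbb{Z}$, as the paper does, removes this obstacle and makes the invariance exact, and I would note that the argument uses nothing beyond translation covariance of $\tilde f$, so it applies verbatim to any CNN that performs no sub-sampling.
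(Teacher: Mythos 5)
Your proof is correct and takes essentially the same route as the paper's: apply the translation-covariance lemma to the composition $f_n \circ \cdots \circ f_1$, then absorb the shift by the change of variables $y = x+s$ in the sum over the zero-padded infinite grid. Your added remarks on absolute summability and on reading $N$ as a fixed normalizer rather than the cardinality of $\mathbb{Z}$ tighten points the paper's proof glosses over, but the underlying argument is identical.
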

\begin{proof}
	Using the translation covariant property of the composition of Gaussian layers, we have that
	\begin{align}
		F[T_s I] &= \frac 1 N \sum_x T_s[ f_n \circ f_{n-1} \circ \cdots \circ f_1[I] ](x)\\
		&=\frac 1 N \sum_x f_n \circ f_{n-1} \circ \cdots \circ f_1[I](x+s) \\
		&=\frac 1 N \sum_z f_n \circ f_{n-1} \circ \cdots \circ f_1[I](z) \\
		&= F[I],
	\end{align}
	where $z = x+s$ and a change of variables was performed.
\end{proof}

\subsection{Insensitivity of Sub-sampled GaussNets}

While the GaussNets in the previous section exhibited translation invariance, and are able to capture as large receptive fields as conventional CNNs with down-sampling (with appropriate choice of $\sigma$ in each layer), with fewer parameters per layer, the lack of down-sampling poses a problem for back-propagation as large memory requirements, preclude some GPUs. Thus, we would like to sub-sample in practice.  However, strict translation invariance is lost; however, we show that translation insensitivity is retained.

We define a layer of the sub-sampled GaussNet as:
\begin{equation} \label{eq:subsample_network_layer}
	f[I] = S_d[r( [WDG_{\sigma}] \ast I )],
\end{equation}
where, as defined before, $S_d$ is the sub-sampling operator. We first show that a single layer GaussNet with sub-sampling followed by an average pooling is translation insensitive. We will then use this result to generalize to the multi-layer case.
\begin{theorem}\label{thrm:insense_sub_Gauss_layer}
	Average pooling of a layer of the sub-sampled GaussNet in \eqref{eq:subsample_network_layer} is translation insensitive, i.e.,
	\begin{equation}
		\left| \frac 1 N \sum_x f[T_s I](x) - \frac 1 N \sum_x f[I](x) \right| \leq C |s|,
	\end{equation}
	where $C$ is a constant and $N$ is the number of pixels in the feature map. The constant $C$ is given by 
	\begin{equation}
		C = C_G N \| I \|_{\infty} \| W \|_{\infty},
	\end{equation}
	where $C_G$ is the Lipschitz constant of the Gaussian and its derivatives.
\end{theorem}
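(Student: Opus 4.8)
The plan is to peel off the sub-sampling operator by first exploiting the translation covariance of everything that precedes it, and then to use the smoothness (Lipschitz continuity) of the Gaussian-smoothed signal to control the residual gap introduced by sampling at shifted locations. Write $h[I] = r([WDG_\sigma]\ast I)$ for the pre-sub-sampled layer output, so that $f[I] = S_d h[I]$. Since convolution and the point-wise ReLU are translation covariant (as in the Translation Covariant lemma), $h[T_sI] = T_s h[I]$. Hence $f[I](x) = h[I](xd)$ and $f[T_sI](x) = (T_s h[I])(xd) = h[I](xd+s)$, and the quantity to bound collapses to
\begin{equation}
	\frac 1 N \sum_x \big( h[I](xd+s) - h[I](xd) \big),
\end{equation}
i.e.\ an average, over the $N$ sub-sampled locations, of the change in the smooth signal $h[I]$ under a shift by $s$.

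Next I would apply the triangle inequality and use that $r$ is $1$-Lipschitz to replace $h[I]$ by the linear convolution $u \defined [WDG_\sigma]\ast I$, reducing each summand to $|u(xd+s)-u(xd)|$. Writing $u(y) = \sum_t (WDG_\sigma)(y-t)\,I(t)$ and expanding $WDG_\sigma = \sum_i W_i D^iG_\sigma$, I would bound
\begin{equation}
	|u(y+s)-u(y)| \le \|I\|_\infty \|W\|_\infty \sum_t \sum_i \big| D^iG_\sigma(y+s-t) - D^iG_\sigma(y-t) \big|,
\end{equation}
and then invoke the Lipschitz constant $C_G$ of the Gaussian and its derivatives to bound each difference $|D^iG_\sigma(\cdot+s) - D^iG_\sigma(\cdot)|$ by $C_G|s|$. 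Because only the (at most $N$) pixels in the support of $I$ contribute to the $t$-sum, this yields a per-location bound proportional to $C_G N \|I\|_\infty \|W\|_\infty |s|$.

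Finally, inserting this per-location bound back into the average $\frac1N\sum_x$ --- which has $N$ terms and a $\frac1N$ prefactor --- produces the claimed estimate with $C = C_G N \|I\|_\infty \|W\|_\infty$. The conceptual crux, and the step I expect to require the most care, is the interaction between sub-sampling and translation: sub-sampling does not commute with $T_s$, so exact covariance (and hence invariance) is genuinely lost, and the whole argument hinges on the fact that the Gaussian pre-smoothing makes $u$ Lipschitz, so that evaluating it at the shifted grid points $xd+s$ instead of $xd$ perturbs each sample only linearly in $|s|$. The remaining delicate point is purely bookkeeping on constants, namely tracking how the sum over the input support contributes the factor $N$ and how the six derivative components together with the Lipschitz constants of the individual $D^iG_\sigma$ are absorbed into the single constant $C_G$.
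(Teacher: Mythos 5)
Your proposal is correct and follows essentially the same route as the paper's proof: reduce to the pre-activation convolution via the $1$-Lipschitz property of the ReLU, express the shift as an argument shift of the Gaussian derivatives via a change of variables (your translation-covariance framing of $h$ is the same computation), and invoke the Lipschitz constant $C_G$ of $D^iG_\sigma$ together with a count of the support to obtain $C = C_G N \|I\|_\infty \|W\|_\infty$. The only cosmetic difference is that you package the first step as covariance of the un-sub-sampled layer rather than writing out the shifted convolution explicitly, which does not change the argument or the constant.
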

\begin{proof}
	We compute 
	\begin{align}
		[WDG_{\sigma}] \ast I(x) &=  \sum_y \sum_j w_j D_j G_{\sigma}(x-y) I(y)
	\end{align}
	and
	\begin{align*}
		[WDG_{\sigma}] \ast [T_sI](x) &=  \sum_y \sum_j w_j D_j G_{\sigma}(x-y) I(y+s)\\
		&= \sum_y \sum_j w_j D_j G_{\sigma}(x-y-s) I(y),
	\end{align*}
	where we have performed a change of variables. 
%	Then we see that
%	\begin{align}
%		S_d[ (WDG_{\sigma}) \ast I](x) &=  \sum_y \sum_j w_j D_j G_{\sigma}(xd-y) I(y) \\
%		S_d[ (WDG_{\sigma}) \ast (T_sI) ](x) &= \sum_y \sum_j w_j D_j G_{\sigma}(xd-y-s) I(y),
%	\end{align}
%	and thus,
%	\begin{multline}
%		S_d[ (WDG_{\sigma}) \ast I](x) - S_d[ (WDG_{\sigma}) \ast (T_sI) ](x) = %\\
%		\sum_{y,j} w_j [D_j G_{\sigma}(xd-y)-D_j G_{\sigma}(xd-y-s)] I(y).
%	\end{multline}
	Thus,
	\begin{equation}
		[ (WDG_{\sigma}) \ast I](x) - [ (WDG_{\sigma}) \ast (T_sI) ](x) = 
		\sum_{y,j} w_j [D_j G_{\sigma}(x-y)-D_j G_{\sigma}(x-y-s)] I(y).
	\end{equation}
	Now,
	\begin{align}
		\left|   \frac 1 N \right.  & \left. \sum_x f[I](x) - \frac 1 N \sum_x f[T_sI](x)\right|   \\
		&\leq \frac 1 N \sum_x | f[I](x)-f[T_sI](x) | \\
		&\leq \frac 1 N \sum_x
		\left| 
			S_d\left\{ r[K \ast I](x) - r[ K \ast (T_sI) ](x)\right\}
		\right| \\
		&\leq \frac 1 N \sum_x
		\left| 
			r[K \ast I](xd) - r[K \ast (T_sI) ](xd)
		\right|	\\
		&\leq \frac 1 N \sum_x
		\left| 
			[K \ast I](xd) - [K \ast (T_sI) ](xd)
		\right|,
	\end{align}
	where $K=WDG_{\sigma}$ and the last inequality is due to the Lipschitz continuity of the rectified linear unit, i.e., $|r(x)-r(y)| \leq |x-y|$. Therefore,
	\begin{equation}
		\left|\frac 1 N \sum_x f[I](x)- \frac 1 N \sum_x f[T_sI](x)\right| \leq 
		\frac 1 N \sum_{x,y,j} |w_j| |[D_j G_{\sigma}(xd-y)-D_j G_{\sigma}(xd-y-s)]| |I(y)|.
	\end{equation}
	Since the Gaussian (and its derivatives) are Lipschitz continuous, with Lipschitz constant $C_G$, we have that
	\begin{align*}
		\left|\frac 1 N \sum_x f[I](x)- \frac 1 N \sum_x f[T_sI](x)\right| &\leq C_G N \| I \|_{\infty} \| W \|_{\infty} |s|.
	\end{align*}
\end{proof}

We now proceed to show that average pooling of a multi-layer GaussNet is also translation insensitive. To do this, we first show that a layer of a GaussNet with sub-sampling is $g$-translation covariant with $g(s)=s/d$:
\begin{lemma}[Sub-Sampled GaussNet Layer is $g$-Translation Covariant]
	The GaussNet with sub-sampling layer defined in \eqref{eq:subsample_network_layer} is translation covariant (a shift of $s$ in the input corresponds to a shift of $s/d$ in the output map).
\end{lemma}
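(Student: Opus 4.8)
The plan is to decompose the sub-sampled layer $f[I] = S_d[r([WDG_{\sigma}]\ast I)]$ into two pieces and track how each interacts with translation. Write $h[I] = r([WDG_{\sigma}]\ast I)$ for the convolution-plus-rectification part, so that $f = S_d \circ h$. The first observation I would record is that $h$ is ordinary translation covariant: this is precisely the content of the reasoning behind the earlier lemma that GaussNets are translation covariant, since convolution commutes with $T_s$ and the rectified linear unit, being a pointwise map, does too. Hence $h[T_s I] = T_s\, h[I]$ for every shift $s$ and every input $I$, and no new computation is needed here.

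The core of the argument is then a single commutation identity between the sub-sampling operator and the translation operator, namely $S_d \circ T_s = T_{s/d} \circ S_d$. I would verify this directly from the definitions $S_d J(x) = J(xd)$ and $T_s J(x) = J(x+s)$: for any feature map $J$,
\begin{align*}
	S_d[T_s J](x) &= (T_s J)(xd) = J(xd + s),\\
	T_{s/d}[S_d J](x) &= (S_d J)(x + s/d) = J((x + s/d)d) = J(xd + s),
\end{align*}
so the two sides agree pointwise. This identity is exactly the reason the translation rate is scaled by $1/d$: sampling on the coarse grid converts a shift of $s$ on the fine grid into a shift of $s/d$ on the coarse grid.

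Composing the two facts finishes the proof. Setting $J = h[I]$ and invoking translation covariance of $h$,
\begin{equation*}
	f[T_s I] = S_d[h[T_s I]] = S_d[T_s\, h[I]] = T_{s/d}[S_d\, h[I]] = T_{s/d}\, f[I],
\end{equation*}
which is precisely $g$-translation covariance with $g(s) = s/d$. I would also note, to match the definition of a $g$-translation covariant operator, that $g(s)=s/d$ is monotone and bijective on $\R$ since $d>1$.

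The step I expect to need the most care is the commutation identity $S_d T_s = T_{s/d} S_d$, not because it is deep but because of the domain bookkeeping: on the discrete grid $\Z$ a shift $s$ maps cleanly onto the coarse grid only when $s/d$ is an integer. The cleanest route is to regard the shifts as acting on the zero-padded infinite-domain extension set up in the Terminology, and to read the identity off the index relation $J(xd+s)$ derived above, which holds regardless of integrality. Everything else is inherited verbatim from the no-sub-sampling case and requires no further work.
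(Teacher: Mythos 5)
Your proof is correct, and it reaches the same conclusion by the same underlying arithmetic --- the identity $xd+s=(x+s/d)d$ --- but it packages the argument differently from the paper. The paper unfolds the layer into the explicit double sum $\sum_{y,j} w_j D_j G_{\sigma}(xd-y-s)I(y)$ and reindexes the Gaussian's argument, which is also how it gives meaning to the fractional shift $s/d$ (as an integer shift on the fine, non-sub-sampled grid). You instead isolate a purely formal operator identity $S_d\circ T_s = T_{s/d}\circ S_d$ and compose it with the translation covariance of the pre-sub-sampling map $h$, citing the earlier lemma rather than recomputing the convolution. What your route buys is generality and cleanliness: it applies verbatim to \emph{any} translation-covariant $h$ (not just the Gauss--Hermite layer), and your signs are consistent with the stated convention $T_sI(x)=I(x+s)$ throughout, whereas the paper's chain ends at $f[I](x-s/d)$ before asserting $f[T_sI]=T_{s/d}f[I]$, a small sign slip inherited from the preceding theorem's change of variables. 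What the paper's explicit computation buys is a concrete answer to the integrality issue raised in its Remark on Fractional Shifts --- the fractionally shifted map is \emph{defined} by the Gaussian evaluated at the integer $xd-y-s$ --- and you handle the same point adequately by defining $T_{s/d}S_dJ$ through the integer index $xd+s$ on the zero-padded infinite domain. No gap; your version is a valid and slightly more modular proof of the same lemma.
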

\begin{remark}[Fractional Shifts]
	Note that $s/d$ may not be an integer. Since the image/feature map is defined discretely, we specify how this operation is defined. We will see that the answer will arise naturally in the proof: the fractional shifted feature map will be defined by a formula that needs only the Gaussian on the non-sub-sampled domain, where $s/d$ will correspond to an integer shift.
\end{remark}
\begin{proof}
From the proof of the previous theorem, we have that 
\begin{align}
	S_d & \left\{ \right. \left. r[ (WDG_{\sigma}) \ast (T_sI)] \right\}(x) \\
	&= r\left\{ \sum_y \sum_j w_j D_j G_{\sigma}(xd-y-s) I(y) \right\} \\
	&=  r\left\{\sum_y \sum_j w_j D_j G_{\sigma}[ (x-s/d)d-y ] I(y) \right\}\\
	&= S_d\left\{ r[ (WDG_{\sigma}) \ast I] \right\}(x-s/d).
\end{align}
Note in the previous equation, the spatial argument is discrete and $s/d$ may not be an integer. However, the expression makes perfect sense as it is applied as an argument to the Gaussian, which applies in the non-sub-sampled domain where the fractional shift corresponds to an integral value.  Therefore, by the translation covariance of rectified linear unit, we have that
\begin{equation}
	f[T_sI] = T_{s/d} f[I].
\end{equation}
\end{proof}

We now move to proving that average pooling for a deep GaussNet is translation insensitive.  First, we need a lemma that shows that if the input feature maps are close, then the outputs through a layer of a GaussNet are close.
\begin{lemma} \label{lem:insensitive_input}
	If $I$ and $J$ are feature maps such that $|J_1(x)-J_2(x)|\leq C_I|s|$ for all $x$ and $f$ is a GaussNet layer with sub-sampling, then 
	\begin{equation}
		\max_x | f[J_1](x)-f[J_2](x) | < C_f |s|,
	\end{equation}
	where $C_f = C_I N \| W\|_{\infty} \| DG_{\sigma}\|_{\infty}$, and $N$ is the spatial size of the output feature map.
\end{lemma}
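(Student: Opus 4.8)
The plan is to mirror the structure of the proof of Theorem~\ref{thrm:insense_sub_Gauss_layer}, with the pointwise difference $J_1-J_2$ now playing the role that the shift-induced difference played there. First I would fix an arbitrary output pixel $x$ and unfold the layer $f[J]=S_d\{r([WDG_{\sigma}]\ast J)\}$. The sub-sampling operator simply evaluates its argument at $xd$, so
\begin{equation}
	f[J_1](x)-f[J_2](x) = r\big([WDG_{\sigma}]\ast J_1\big)(xd) - r\big([WDG_{\sigma}]\ast J_2\big)(xd).
\end{equation}
Applying the Lipschitz continuity of the rectifier, $|r(a)-r(b)|\le|a-b|$, exactly as in the earlier theorem, strips off the nonlinearity and reduces the task to bounding $\big|[WDG_{\sigma}]\ast J_1(xd) - [WDG_{\sigma}]\ast J_2(xd)\big|$.

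Next I would expand the convolution via $[WDG_{\sigma}]\ast J(x)=\sum_y\sum_j w_j D_j G_{\sigma}(x-y)J(y)$ and use linearity to collapse the two terms into one sum over the difference:
\begin{equation}
	[WDG_{\sigma}]\ast J_1(xd) - [WDG_{\sigma}]\ast J_2(xd) = \sum_y\sum_j w_j\, D_j G_{\sigma}(xd-y)\,\big(J_1(y)-J_2(y)\big).
\end{equation}
Taking absolute values and applying the triangle inequality, I would bound each factor separately: $|w_j|\le\|W\|_{\infty}$, $|D_j G_{\sigma}(xd-y)|\le\|DG_{\sigma}\|_{\infty}$, and crucially $|J_1(y)-J_2(y)|\le C_I|s|$ by hypothesis. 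This yields, for every fixed $x$,
\begin{equation}
	|f[J_1](x)-f[J_2](x)| \le C_I|s|\,\|W\|_{\infty}\,\|DG_{\sigma}\|_{\infty}\sum_{y,j} 1 .
\end{equation}
Since the right-hand side does not depend on $x$, taking $\max_x$ costs nothing, and the claimed constant emerges from the residual count $\sum_{y,j}1$.

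The main obstacle, and the only step requiring care, is justifying that this residual sum is finite and equals (a constant times) $N$. Although the feature maps are formally defined on all of $\mathbb{Z}$ by zero-padding, the difference $J_1-J_2$ is supported on the finite original domain, so only finitely many $y$ contribute; the index $j$ runs over the six Gaussian-derivative components, a fixed constant I would absorb into $N$ as is done implicitly in Theorem~\ref{thrm:insense_sub_Gauss_layer}. Identifying this spatial count with the feature-map size $N$ gives $C_f=C_I N\|W\|_{\infty}\|DG_{\sigma}\|_{\infty}$, and the strict inequality is a harmless strengthening since the derivative terms $D_j G_{\sigma}(xd-y)$ cannot all simultaneously attain the supremum $\|DG_{\sigma}\|_{\infty}$. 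I would flag that $N$ here really counts the spatial locations where $J_1-J_2$ can be nonzero rather than literally the output size, so the statement uses $N$ loosely for ``the spatial size of the feature maps,'' consistent with the bookkeeping in the preceding theorem.
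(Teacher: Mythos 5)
Your proposal is correct and follows essentially the same route as the paper's proof: strip the rectifier via its Lipschitz property, use linearity of the convolution to isolate $J_1-J_2$, apply the triangle inequality with the sup-norm bounds on $W$ and $DG_{\sigma}$, and count the surviving terms as $N$. Your added remarks — that the sum over $y$ really counts input-domain support rather than the output size, and that the factor of six from the derivative components is absorbed into the constant — are fair observations about looseness the paper itself shares, but they do not constitute a different argument.
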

\begin{proof}
We estimate the difference:
\begin{align}
	 |f[I](x)-f[J](x)| & = \left| r\left( \sum_y K(xd-y) J_1(y) \right)  -  r\left( \sum_y K(xd-y) J_2(y) \right) \right| \\
		 				 &\leq \left| \sum_y K(xd-y)( J_1(y) - J_2(y) ) \right| \\
		 				 &\leq \sum_y |K(xd-y)| | J_1(y) - J_2(y) | \\
		 				 &\leq C_I |s| \sum_y |K(xd-y)| \\
		 				 &\leq C_I |s| \sum_y \left| \sum_j w_j D_jG_{\sigma}(xd-y) \right| \\
		 				 &\leq C_I N \| W\|_{\infty} \| DG_{\sigma}\|_{\infty} |s|,
\end{align}
where $K= WDG_{\sigma}$, and we applied the Lipschitz continuity of $r$ in the first step, and the fact that $|J_1(x)-J_2(x)|\leq C_I |s|$.
\end{proof}

Using the previous lemma, we can now show that the output of a deep GaussNet is translation insensitive.
\begin{theorem}
	A deep sub-sampled GaussNet that is followed by a global average pooling layer, i.e.,
	\[
	F[I] = \frac 1 N \sum_x f_n\circ f_{n-1} \circ \cdots \circ f_1[I](x)
	\]
	is translation insensitive.
\end{theorem}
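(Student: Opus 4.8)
The plan is to prove the statement by induction on the number of layers, propagating a pointwise (sup-norm) closeness bound between the feature maps produced from $T_sI$ and from $I$, and only collapsing to a scalar via the average pool at the very end. Write $g_k = f_k \circ \cdots \circ f_1$ for the composition of the first $k$ sub-sampled GaussNet layers, so that $F[I] = \frac1N \sum_x g_n[I](x)$. The quantity I would control inductively is $\max_x |g_k[T_sI](x) - g_k[I](x)|$, and the claim at level $k$ is that this is bounded by $C_k|s|$ for a finite constant $C_k$ depending only on $\sigma$, the weights, and the feature-map sizes.

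For the base case ($k=1$) I would reuse the estimate already established inside the proof of Theorem~\ref{thrm:insense_sub_Gauss_layer}. Before the average over $x$ is taken there, the argument bounds, for each fixed $x$, $|f_1[T_sI](x) - f_1[I](x)| \le \sum_{y,j} |w_j|\,|D_jG_\sigma(xd-y-s) - D_jG_\sigma(xd-y)|\,|I(y)|$, using the Lipschitz continuity of the rectifier. Lipschitz continuity of the Gaussian and its derivatives, $|D_jG_\sigma(\cdot) - D_jG_\sigma(\cdot - s)| \le C_G|s|$, together with the boundedness of the input image, then gives $\max_x |f_1[T_sI](x) - f_1[I](x)| \le C_G N \|I\|_\infty \|W\|_\infty |s| =: C_1 |s|$. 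This base step is really the crux: it is the one place where an honest \emph{shift} of the input, which is \emph{not} a small pointwise perturbation, must be converted into a small pointwise change of the output, and that conversion is exactly what the smoothness of the Gauss--Hermite basis buys us.

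For the inductive step I would invoke Lemma~\ref{lem:insensitive_input} directly. Assuming $\max_x |g_{k-1}[T_sI](x) - g_{k-1}[I](x)| \le C_{k-1}|s|$, I apply the lemma with $J_1 = g_{k-1}[T_sI]$, $J_2 = g_{k-1}[I]$, $C_I = C_{k-1}$, and the layer $f = f_k$, obtaining $\max_x |f_k[J_1](x) - f_k[J_2](x)| \le C_k|s|$ with $C_k = C_{k-1}\, N \|W_k\|_\infty \|DG_\sigma\|_\infty$; since $f_k[J_1] = g_k[T_sI]$ and $f_k[J_2] = g_k[I]$, this is precisely the claim at level $k$. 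Pleasantly, the constant in Lemma~\ref{lem:insensitive_input} does not involve the magnitude of the intermediate maps, so I never need to track boundedness of deep features: only the fixed input image $I$ enters, through the base case, and the constants merely accumulate multiplicatively, $C_n = C_1 \prod_{k=2}^{n} N_k \|W_k\|_\infty \|DG_\sigma\|_\infty$.

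Finally, with $\max_x |g_n[T_sI](x) - g_n[I](x)| \le C_n|s|$ in hand, I would conclude by the triangle inequality applied to the average pool: $\big|\frac1N\sum_x (g_n[T_sI](x) - g_n[I](x))\big| \le \frac1N \sum_x |g_n[T_sI](x) - g_n[I](x)| \le C_n|s|$, which is exactly translation insensitivity with $C = C_n$. I expect the only genuinely substantive obstacle to be the base case: after the first layer every subsequent layer sees two pointwise-close inputs, so the later stages are mechanical applications of Lemma~\ref{lem:insensitive_input}, whereas the first layer alone is responsible for turning a shift into a linear-in-$|s|$ perturbation. (As a sanity check, the $g$-translation covariance of a sub-sampled layer shows the effective shift contracts by $1/d$ per layer, which is consistent with — and could even sharpen — the bound above, though the induction does not need to exploit this contraction.)
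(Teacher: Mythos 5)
Your proposal is correct and follows essentially the same route as the paper: the pointwise single-layer bound $\max_x|f_1[T_sI](x)-f_1[I](x)|\le C_G N_1\|I\|_\infty\|W_1\|_\infty|s|$ extracted from the computations in Theorem~\ref{thrm:insense_sub_Gauss_layer}, repeated application of Lemma~\ref{lem:insensitive_input} to the remaining layers $f_n\circ\cdots\circ f_2$ with multiplicatively accumulating constants, and a final triangle inequality over the average pool. Your explicit inductive framing and the observation that only the first layer must convert a shift into a pointwise perturbation are just a cleaner presentation of the paper's "apply the lemma successively" argument.
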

\begin{proof}
	We may now apply Lemma~\ref{lem:insensitive_input} successively to the network 
	\begin{equation}
	\hat f = f_n\circ f_{n-1} \circ \cdots \circ f_2
	\end{equation}
	to arrive at
	\begin{equation} \label{eq:multi_layer_insensitive}
		|\hat f[J_1](x) - \hat f[J_2](x)| \leq C_I  \| DG_{\sigma}\|_{\infty}^{n-1} \prod_{i=2}^{n}\| W_i\|_{\infty}N_i |s|,
	\end{equation}
	where $N_i$ is the spatial size in pixels of the feature map at the $i^{\text{th}}$ layer.
	
	We may set $J_1 = f_1[I]$ and $J_2 = f_1[T_sI]$, and apply \eqref{eq:multi_layer_insensitive}. By the $g$-translation covariance of a layer of a GaussNet, we have that $J_2 = T_{s/d}f_1[I]$. Therefore (using computations in Theorem~\ref{thrm:insense_sub_Gauss_layer}), 
	\begin{equation} \label{eq:trans_insensitivity_single_layer}
		|f_1[I](x) - f_1[T_sI](x)| \leq C_G N_1 \|I\|_{\infty} \|W_1\|_{\infty} |s|.
	\end{equation}
	Now combining \eqref{eq:multi_layer_insensitive} and \eqref{eq:trans_insensitivity_single_layer} with $C_I = C_G N_1 \|I\|_{\infty}$ in \eqref{eq:multi_layer_insensitive}, we have that
	\begin{equation}
		|(f_n\circ f_{n-1} \circ \cdots \circ f_1)[I](x) - (f_n\circ f_{n-1} \circ \cdots \circ f_1)[T_sI](x) | \leq 
		 C_G  \|I\|_{\infty}\| \| DG_{\sigma}\|_{\infty}^{n} \prod_{i=1}^{n}\| W_i\|_{\infty} N_i |s|.
	\end{equation}
	Therefore average the previous result over $x$, we have that 
	\begin{equation}
		|F[I](x)-F[T_sI](x)| \leq 
	 	C_G \|I\|_{\infty}\| DG_{\sigma}\|_{\infty}^{n} \prod_{i=1}^{n}\| W_i\|_{\infty} N_i |s|.
	\end{equation}
	
\end{proof}

\begin{remark}[Lack of Translation Invariance of Sub-Sampled GaussNets]
We show now that sub-sampled GaussNets are not in general translation invariant, thus justifying the need for the weaker translation insensitivity. Recall that
\begin{align}
\sum_x f[I](x) &= \sum_x r\left\{ \sum_{y,j} w_j D_j G_{\sigma}(xd-y) I(y) \right\} \\
\sum_x f[T_sI](x) &= \sum_x r\left\{ \sum_{y,j} w_j D_j G_{\sigma}(xd-y-s) I(y) \right\}.
\end{align}
For translation invariance to hold, the above two sums should be equal. One would like to
	attempt the change of variable $zd = xd-s$, which would result in $z=x-s/d$.  Note
	that the range of summation (of $x$ and $z$) would only be equal when $s$ is a
	multiple of $d$ so that the Gaussian in the two sums above would have the same
	arguments, and thus the summations would not be equal (for an arbitrary $I$)
	unless $s$ is a multiple of $d$. Hence, GaussNets with sub-sampling in  general will not be translation invariant.

\end{remark}

\begin{remark}[Choice of $\sigma$]
We note that in the proofs, the translation insensitivity arises from the fact that $G_{\sigma}(xd)-G_{\sigma}(xd-s)$ is small, which was bounded by a Lipschitz estimate. In practice, $\sigma$ should be large compared to $s_{\text{max}}d$ to be less sensitive (where $s_{\text{max}}$ is shift of maximum desired insensitivity) - in this case the difference between the Gaussian and shifted Gaussian is small. Note that the sub-sampling rate $d$ contracts the Gaussian, and thus the factor of $d$ in choosing the $\sigma$. So the higher the desired insensitivity to large shifts and the sub-sampling rate, the more smoothing one has to perform.
\end{remark}

\begin{remark}[Contractive Operator]
	We have thus far shown that sensitivity of the GaussNet layer to shift in the input 
	is proportional to the ``size" of the weights, the input, and the Gaussian
	derivatives. Thus, weight regularization and batch normalization are
	also both beneficial in terms of adding robustness to CNN. 
	This sensitivity can be driven to $0$ as more GaussNet layers are added if 
	each GaussNet layer is a contraction operator (i.e., the Lipschitz constant is less than 1) on the shift perturbations.
	For the standard 1d Gaussian density function, its $n$-order derivatives are bounded by $\kappa_n<1$ for $n<4$.  
	In theory,  to ensure that each Gauss layer is contractive, one only need to apply normlization techniques on $I$ to 
	ensure $\| I\|<1$ and learning with hard constraints to ensure $\|W\|<\frac{1}{N}$.
\end{remark}

\subsection{Existing CNNs are Not Translation Insensitive}
We show that translation insensitivity is not a property of existing CNNs with sub-sampling. We analyze a layer of a general sub-sampled CNN. Let $K$ represent the learned kernel in a layer of a CNN. Then a layer (for instance, VGG with an average pool) is given by
\begin{equation}
	f_{\text{old}}[I] = S_d r( K\ast I ), \mbox{ and } 
	F_{\text{old}}[I] = \frac 1 N \sum_x f_{\text{old}}(x).
\end{equation}
We show that this architecture is not translation insensitive:
\begin{align}
	F_{\text{old}}[ & T_sI](x) - F_{\text{old}}[I](x)  \\ 
	&=\frac 1 N \sum_x S_d[ r(K\ast I) - T_sr(K\ast I)](x) \\
	&= \frac 1 N \sum_x J(xd) - J(xd-s)
\end{align}
where $J = r(K\ast I)$. For any $d > 1$ this sum cannot be controlled by $|s|$ without any additional smoothness properties on the class of $J$, which we do not have on general images, nor feature maps. For instance, in the case that $d=2$ and $s=1$, we have that the difference above is $\sum_x J(2x)-\sum_x J(2x-1)$, and since the ranges of the functions $J(2x)$ and $J(2x-1)$ are in general distinct, the difference of the sums cannot be controlled.

Note that $K$, the kernel, which is learned, would not in general have smoothness guarantees.  Therefore, the difference $J(x)-J(x-s)$ cannot be Lipschitz bounded. In practice, CNNs are typically implemented with small supports (e.g., $3\times 3$ kernels) and thus shifted differences of the kernel could be large, breaking translation sensitivity. For translation insensitivity, the kernel would need to smoothly die down to zero at the boundary of its support.

From the argument, the lack of translation insensitivity is due to the lack of smoothness on the kernel $K$, rather than just sub-sampling; of course larger sub-sampling rates would require smoother learned kernels.

%As in the proof of Theorem~\ref{thrm:insense_sub_Gauss_layer}, we have %that 
%\begin{align} \label{eq:K_diff}
%K\ast I(x) - K\ast T_s[I](x) &= \sum_{y} [K(x-y)-K(x-y-s)] I(y).
%\end{align}
%For any finite support kernel, we have that 
%\begin{equation}
%\min_{z} | K(z)-K(z-s) | > 
%\min_{z\in [x_{\text{min}}-s,x_{\text{min}}] \cup [x_{\text{max}}-%s,x_{\text{max}}] } |K(z)| > 0,
%\end{equation}
%where $[x_{\text{min}},x_{\text{max}}] = \mbox{supp}(K)$ is the %support of the kernel. Note that the value above cannot be controlled %by $s$ as the input $I$ can be arbitrary, and thus the value of the %\eqref{eq:K_diff}.

\subsection{Some Anti-Aliased CNNs are Insensitive}
One may ask whether blurring the output before sub-sampling in layers of the CNN to remove high frequency components and avoid aliasing effects would lead to a translation insensitive CNN. As we show now, provided the kernel is chosen appropriately to be smooth and to die down to zero, such as a Gaussian, the answer is in the affirmative. A layer with anti-aliasing is given by 
\begin{equation}
	f_{a}[I] = S_d\left\{ G_{\sigma} \ast r( K\ast I ) \right\}, \mbox{ and } F_{a}[I] = \frac 1 N \sum_x f_{a}[I](x)
\end{equation}
where $K$ is a typical pixelized kernel (e.g., $3\times 3$ as in ResNet). Similar to the computation in the previous sub-section, the anti-aliased CNN is not translation invariant.  It is, however, translation insensitive. Consider a single-layer anti-aliased CNN with average pooling:
\begin{align}
	& |  F_{a}[I] - F_a[T_sI]| \\
	&= \frac 1 N 
	\left|
		 \sum_{x,y} G_{\sigma}(xd-y) \left[ r ( K\ast I )(y) - 
		 r ( K\ast T_sI )(y) \right]
	\right| \label{eq:alias_1}\\
	&= \frac 1 N \left|
		\sum_{x,y} \left[ G_{\sigma}(xd-y)-G_{\sigma}(xd-y-s)  \right] r ( K\ast I )(y)
	\right| \label{eq:alias_2}\\
	&\leq C_G |s| \sum_y |K\ast I|(y) \label{eq:alias_3}\\
	&\leq N C_G \| K \|_{\infty} \| I \|_{\infty} |s|, \label{eq:alias_4}
\end{align}
where we have performed a change of variables in going from \eqref{eq:alias_1} to \eqref{eq:alias_2}, and used the Lipschitz continuity of the Gaussian and ReLu in going from \eqref{eq:alias_2} to \eqref{eq:alias_3}. 

Thus, as we see, one gets translation insensitivity, similar to our approach in the previous sub-section with a similar Lipschitz constant. As we can see, the only requirement on the anti-aliasing filter is that it be Lipschitz continuous. In practice, for finite data this would imply the need to die down to zero at the border of its support.

%\section{Implementation Details}
%\label{sec:implementation}
%\input{03bimplementation}

\section{Experiments}
\label{sec:experiments} 
\label{sec:expts}
We compare the robustness performance (to small translations) 
of our new architecture against ResNet.

{\bf Datasets}: We performed experiments on CIFAR-10 dataset for image classification.  
We also created a second dataset derived from CIFAR-10 
in which we down-sampled the original CIFAR-10 image to $30\times 30$ 
and then zero-pad the image by one pixel on each side to create a $32\times 32$ image. 
We call this second dataset
CIFAR10-ZP.

{\bf Shifted Test Sets}: We evaluate each network's insensitivity to shift perturbations
by testing them on shifted test sets. The shifted test sets are constructed by
shifting every image in test sets of CIFAR-10 and CIFAR-10-ZP at most 1 pixel in the $x$ and/or $y$ directions (8 different shifts). 
The missing values at borders after the shift is filled by copying the closest pixel in the 
image to the missing pixel on the border\footnote{On CIFAR-10-ZP, these shifts do not remove any content of the original image.}. 

{\bf Architectures}: Given a ResNet architecture ResNet-N, 
we replace its convolution layers that have 
kernel size of greater than $1$ with GaussNet layers.  
We keep the batch normalization, and the residual block structure.  
For every layer in ResNet-N in which there is sub-sampling, we add a corresponding 
sub-sampling layer~\footnote{implemented using an average pooling with stride of $2$} into the new architecture GaussNet-N.
For each GaussNet-N, we also create a version (no sub.) that is without any sub-sampling.  
We also have ResNet-N with Anti-aliasing.  This architecture is created
by applying a Gaussian filter to every sub-sampling layer within ResNet-N. 
Finally for GaussNet-50, we evaluated a \textit{Large} version that is created by replacing also 
the Conv1x1 layers within ResNet-50 with GaussNet convolutions layers. 
The architectures and their estimated sizes are listed in Table~\ref{tab:size}.  

\begin{table}[htp]
\caption{Model Size is proportional to \# of Parameters and the size of the Fwd/Back Pass} 
%\resizebox{0.5\textwidth}{!}{%	
\begin{center}
\begin{tabular}{l|ccc} 
Arch/Size   &   \# of Params.   &   Fwd/Back Pass (MB)   &   Model Size (MB) \\ \hline
ResNet-18   &   11,173,962   &   11.25   &   53.89 \\
GaussNet-18 no sub.   &   7,515,466   &   120.5   &   149.18 \\
GaussNet-18   &   7,515,466   &   15.72   &   44.4 \\
ResNet-50   &   23,520,842   &   17.41   &   107.14 \\
GaussNet-50   &   19,751,690   &   22.64   &   98. \\
GaussNet-50 Large   &   66,567,370   &   22.64   &   276.59
\end{tabular}
\end{center}
\label{tab:size} 
\end{table}

{\bf Implementation Details}: To implement a GaussNet layer, we 
choose the supports of our Gaussians that are the sizes of feature maps.
These have to be chosen with wide support and should die down near the border of feature
map so as to avoid edge effects that would contribute to sensitivity. We use FFTs to
compute the convolution with large supports efficiently. The FFT of the Gaussian needs to
be only computed once.  To compute the derivatives of a Gaussian, we simply used
difference operators (Sobel-Feldman in our case) directly on the Gaussian filtered outputs.  
For experiments on small image data, the output convolution, due to its large support, is
inaccurate because of the lack of data near the borders. To mitigate the effects of this,
rather than performing average pooling, we first multiply the result just before the
pooling by a Gaussian centered at the center of the feature map and then compute the average.

{\bf Robustness Measures}: We measure both the probability of a change in classification in the shifted test sets, $\Delta_1 = 
\frac{1}{8|\mathcal T|}\sum_{I\in\mathcal T}\sum_{s} \mathbf{1}(c(I)\neq c(T_sI))$, where $\mathcal T$ is the test set, $|\mathcal T|$ is its size, and $c$ is the classifier output. We also measure the probability that at least one of the eight shifts of a test image results in a different classification, i.e., $\Delta_2 = \frac{1}{|\mathcal T|}\sum_{I\in \mathcal T} \mathbf{1}( \sum_s \mathbf{1}(c(I)\neq c(T_sI)) \geq 1 )$.

{\bf Training Details}: Using ADAM, we train each architecture including ResNet-N on both CIFAR-10 and CIFAR-10-ZP.  
We perform no training data augmentation in order to test the inherent invariances built in the architectures.
Without using data augmentations, the test accuracy of each architecture plateaus after a quick initial convergence to $80\%-82\%$.  
We did not push the trainings to beyond $100$ epochs to get a better test accuracy 
because our robustness measures are applicable at any level of test accuracy. 
The important thing is to select models that are comparable in terms of their test accuracies. 
For the robustness benchmark, we selected trained models of around or near $80\%$ test accuracy.  

{\bf Benchmark Results}:  The results are shown in Tables~\ref{tab:01}-~\ref{tab:04}. 
The GaussNets significantly outperformed the ResNets and their anti-aliased versions
in all possible combination of datasets and robustness measures.
There is a degradation of robustness across the board
going from CIFAR-10 to CIFAR-10-ZP ranging from only small degradations for the GaussNets to 
very large degradations for ResNets with anti-aliasing.

In training time, GaussNet-N architectures take 
longer than their ResNet counterparts mostly because of the FFT Gaussian filtering.  
However their training times are still comparable and are 
within the same order of magnitude.  For example, on a single GPU laptop, 
GaussNet-50 took about $16239$s to complete 50 epochs of training compared to $4859$s for ResNet-50. 
GaussNet-18 with no sub-sampling and GaussNet-50 Large in contrast take 1-2 orders of magnitude longer to train and only gain a slight advantage in robustness over their un-modified GaussNet-N counterparts.

\begin{table}[htp]
\caption{CIFAR-10 Shifted Test: Robustness(smaller is better).} 
%\resizebox{0.5\textwidth}{!}{%	
\begin{center}
\begin{tabular}{l|ccc}
Arch/Robustness & $\Delta_1$ & $\Delta_2$ & Test-error  \\\hline
ResNet-18   &   $14.24\%$   &   $35.20\%$   &   $20.64\%$ \\
ResNet-18 + Anti-Alias   &   $10.86\%$   &   $25.69\%$   &   $19.82\%$ \\
GaussNet-18   &   $8.97\%$   &   $23.83\%$   &   $20.59\%$ \\
GaussNet-18 no sub.   &   $7.54\%$   &   $22.08\%$   &   $20.88\%$
\end{tabular} 
\end{center}
	\label{tab:01}
\end{table}

\begin{table}[htp]
\caption{CIFAR-10-ZP Shifted Test: Robustness(smaller is better).} 
%\resizebox{0.5\textwidth}{!}{%	
\begin{center}
\begin{tabular}{l|ccc}
Arch/Robustness & $\Delta_1$ & $\Delta_2$ & Test-error  \\ \hline
ResNet-18   &   $19.99\%$   &   $47.94\%$   &   $20.63\%$ \\
ResNet-18 + Anti-Alias   &   $17.53\%$   &   $39.66\%$   &   $20.61\%$ \\
GaussNet-18   &   $9.44\%$   &   $24.94\%$   &   $21.03\%$ \\
GaussNet-18 no sub.   &   $8.96\%$   &   $25.13\%$   &   $20.81\%$
\end{tabular}
\end{center}
% }
	\label{tab:02}
\end{table}

\begin{table}[htp]
\caption{CIFAR-10 Shifted Test: Robustness(smaller is better).} 
%\resizebox{0.5\textwidth}{!}{%	
\begin{center}
\begin{tabular}{l|ccc} 
Arch/Robustness & $\Delta_1$ & $\Delta_2$ & Test-error  \\ \hline
ResNet-50   &   $12.31\%$   &   $31.38\%$   &   $19.06\%$ \\
ResNet-50 + Anti-Alias   &   $10.70\%$   &   $25.49\%$   &   $19.44\%$ \\
GaussNet-50   &   $8.70\%$   &   $23.50\%$   &   $19.68\%$ \\
GaussNet-50 Large   &   $8.43\%$   &   $22.93\%$   &   $19.93\%$
\end{tabular}
\end{center}
	\label{tab:03}
\end{table}

\begin{table}[htp]
\caption{CIFAR-10-ZP Shifted Test: Robustness(smaller is better).} 
%\resizebox{0.5\textwidth}{!}{%	
\begin{center}
\begin{tabular}{l|ccc}
Arch/Robustness & $\Delta_1$ & $\Delta_2$ & Test-error  \\ \hline
ResNet-50   &   $15.60\%$   &   $39.61\%$   &   $20.70\%$ \\
ResNet-50 + Anti-Alias   &   $16.51\%$   &   $39.21\%$   &   $20.70\%$ \\
GaussNet-50   &   $11.18\%$   &   $27.96\%$   &   $20.98\%$ \\
GaussNet-50 Large   &   $10.41\%$   &   $27.24\%$   &   $20.86\%$
\end{tabular}%}
\end{center}
	\label{tab:04} 
\end{table}

\section{Conclusion} 
\label{sec:conclusion} 
We addressed the lack of translation invariance in modern CNNs by introducing a new CNN architecture, GaussNet that is translation invariant and a sub-sampled version that is translation insensitive. We showed analytically why existing CNNs are not only not translation invariant but also not translation insensitive.  We proved analytically that our new architecture is translation insensitive, owing to the enforced smoothness of the kernels. Empirically, we showed that GaussNets could be trained to achieve similar test accuracy as modern CNNs, while being much less sensitive to shifts.  This came at a reasonable increase in training and inference times.  We showed that the sub-sampled GaussNet was as insensitive as the non-sub-sampled one, allowing considerable gains in speed and less memory usage. Some aspects not explored in this paper were insensitivities to other transformations, e.g., scalings and deformations.  GaussNets are naturally adaptable to address these as well.  We plan to explore this in future work.

\appendix

\section{Additional Experimental Analysis}
\label{sec:add_expts}

\subsection{Plots of Translation Sensitivity Over Epochs}

In this section, we show additional evaluations of robustness/insensitivity of the trained models versus the epoch of training. 
Recall from the main paper that to evaluate the robustness/insensitivity: we first shift the 
test data of CIFAR-10 and CIFAR-10-ZP by the $8$ possible 1-pixel translations in the $x$ and $y$ directions
which results in a pair of shifted test sets; and then we evaluate the robustness of
the trained models on the respective shifted test sets via two measures of the probability of {\bf change in classification}.  
We measure $\Delta_1$, i.e. the probability of a change in classification in the shifted test sets. 
We also measure $\Delta_2$, i.e. the probability that at 
least one of the eight shifts of a test image results in a different classification. Note that lower $\Delta_i$ indicates greater insensitivity.

Figures~\ref{fig:11}-\ref{fig:41} show the plots of sensitivity. Note that the GaussNet-18 (and GaussNet-50) architectures are less sensitive than both their corresponding ResNet and ResNet-anti-aliased architectures, uniformly over all epochs.

\begin{figure}[H]
    \centering
\includegraphics[width=0.5\textwidth]{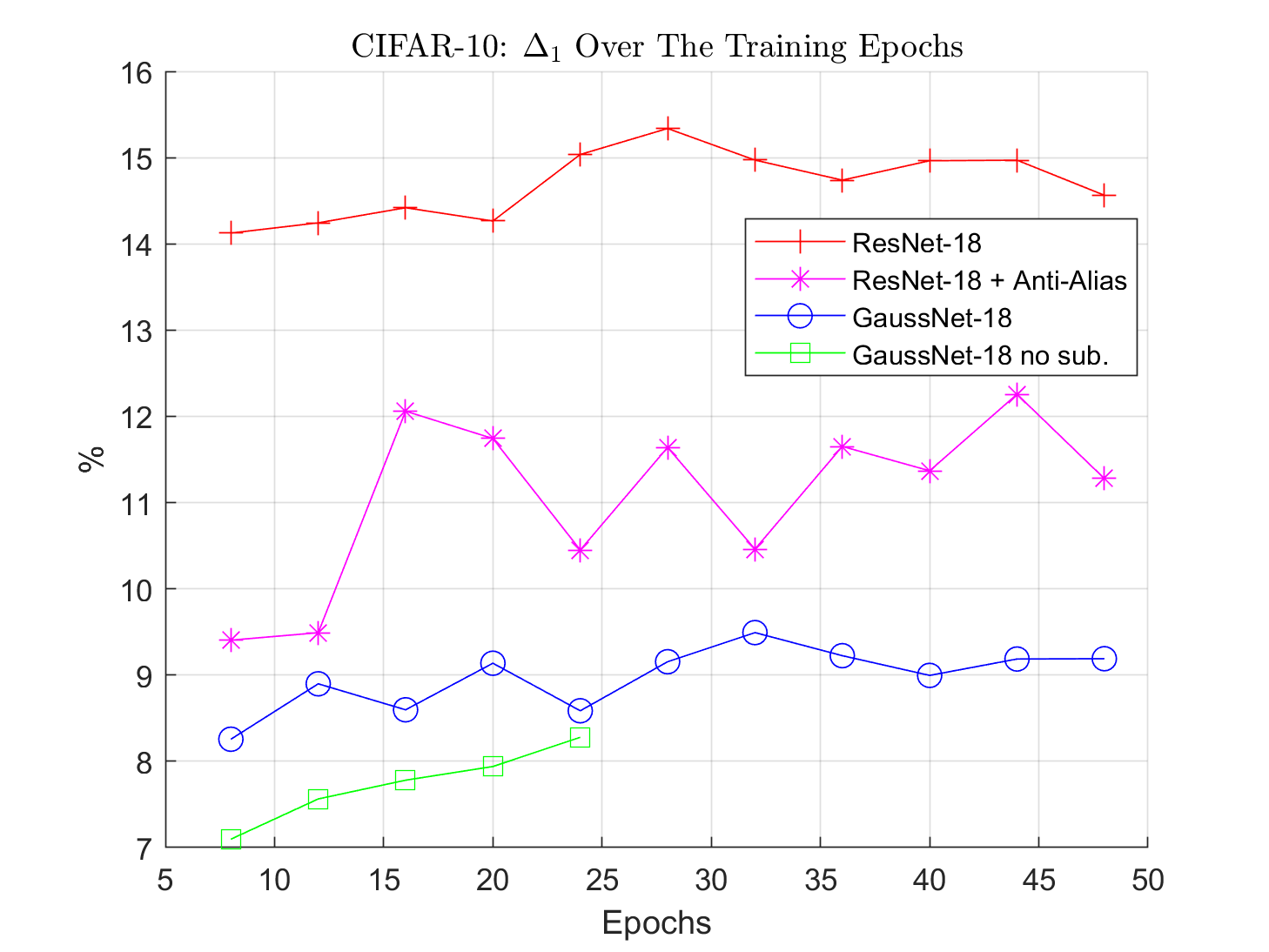}%
\includegraphics[width=0.5\textwidth]{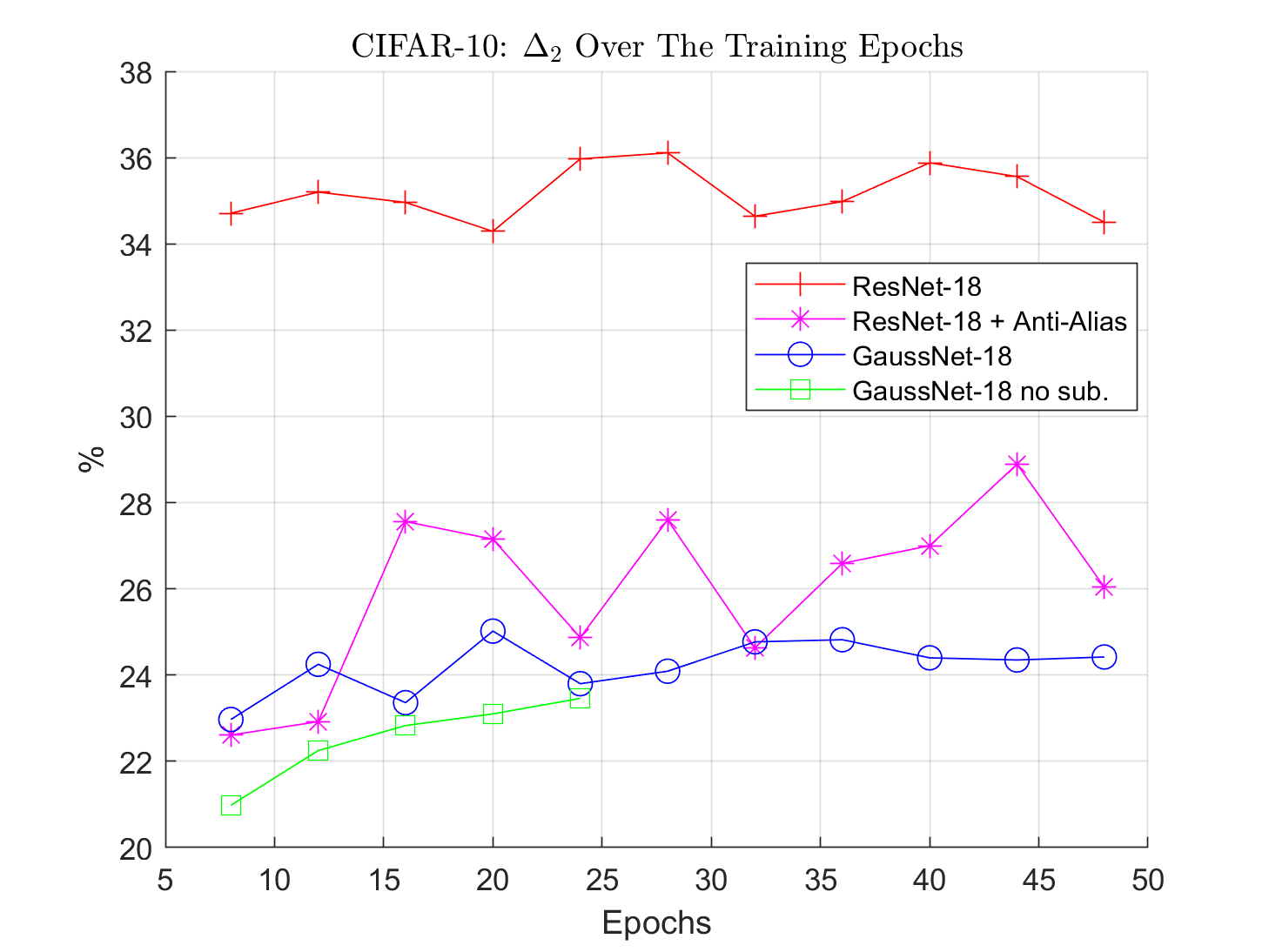}
\caption{CIFAR-10: Sensitivities $\Delta_1$ (left) and $\Delta_1$ (right) versus Epoch for *-18 Architectures. Lower is better.}
\label{fig:11}
\end{figure}

\begin{figure}[H]
    \centering
\includegraphics[width=0.5\textwidth]{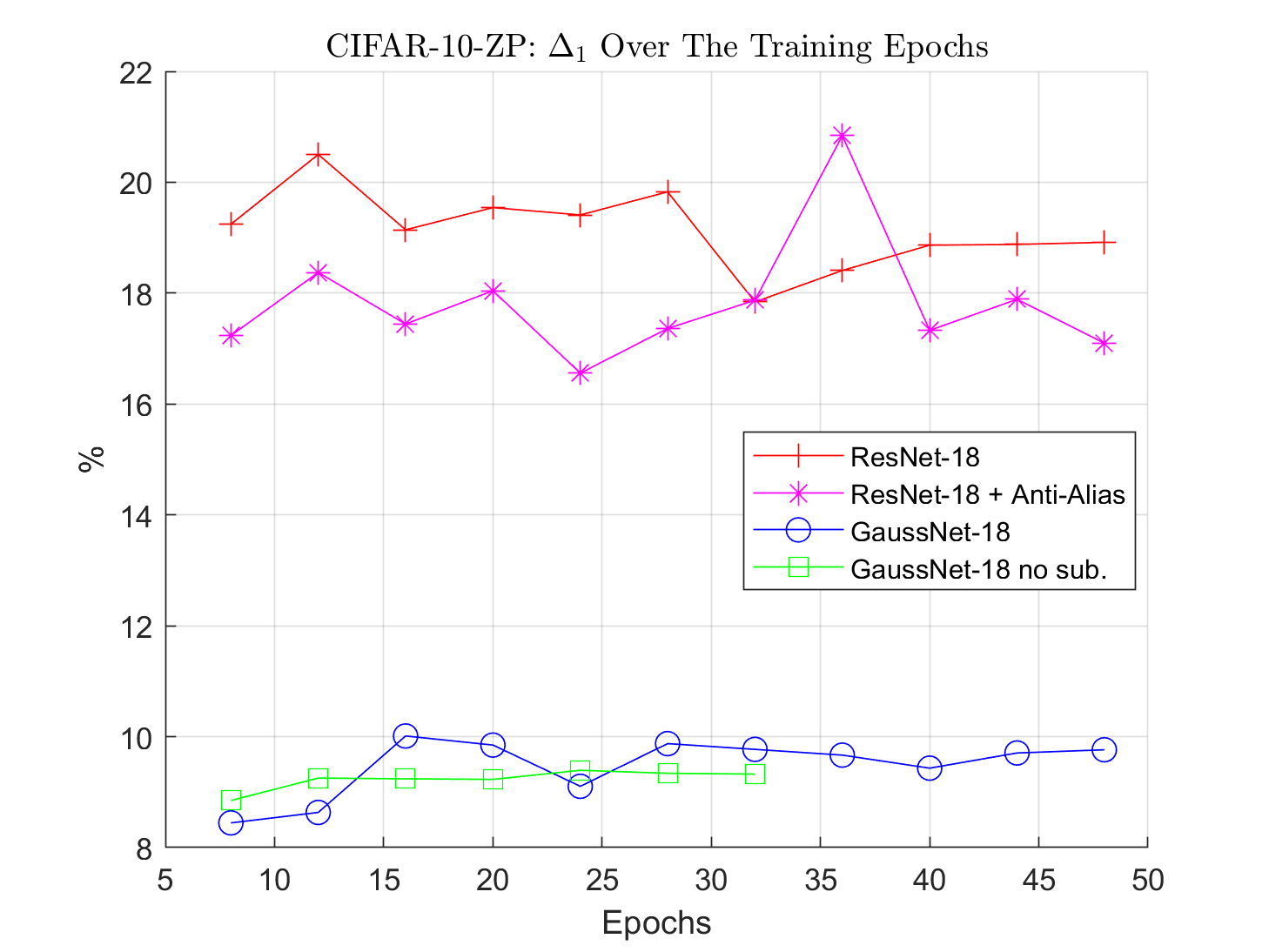}%
\includegraphics[width=0.5\textwidth]{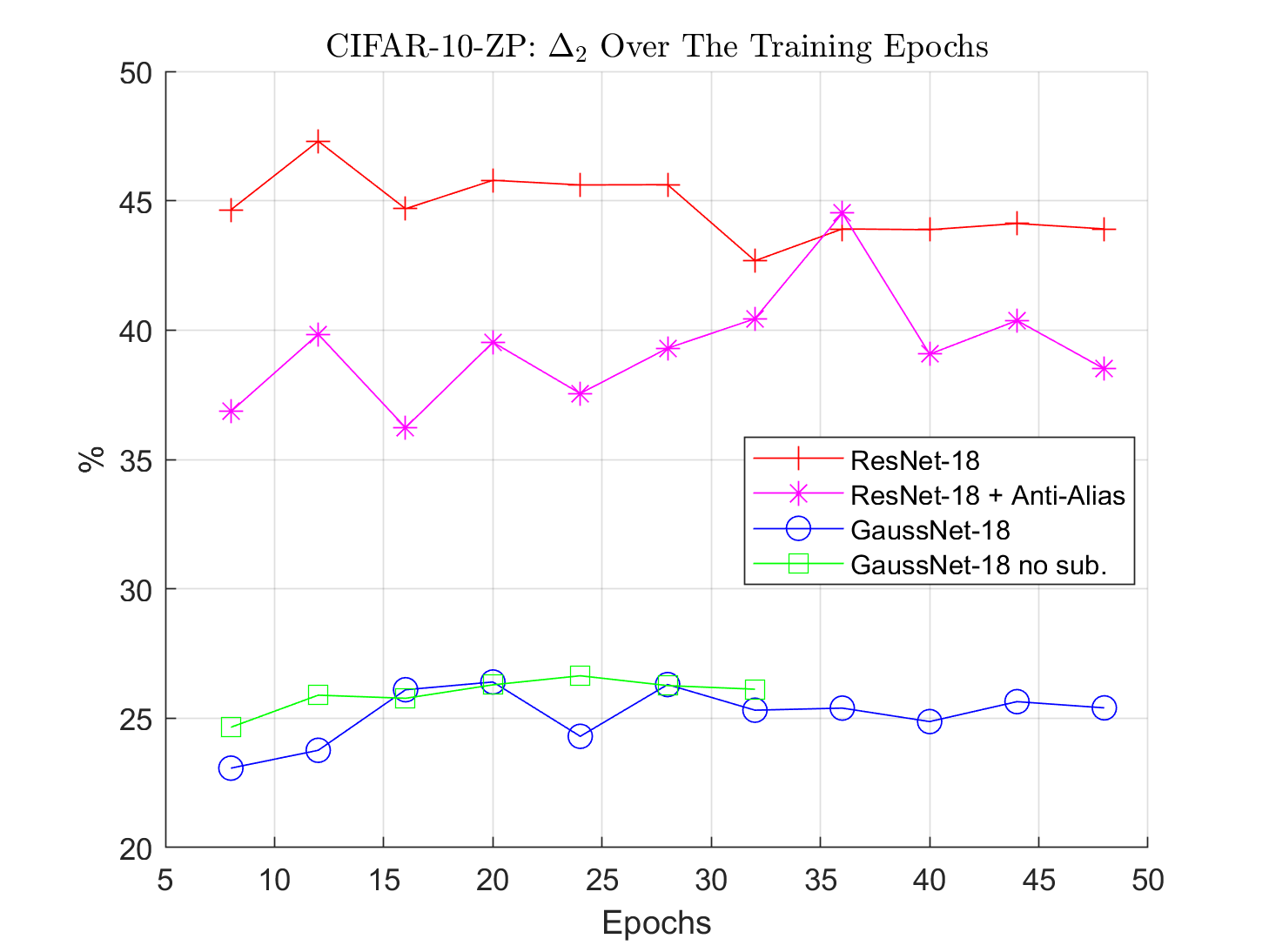}
\caption{CIFAR-10-ZP: Sensitivities $\Delta_1$ (left) and $\Delta_2$ (right) versus Epoch for *-18 Architectures. Lower is better.}
\label{fig:21}
\end{figure}

\begin{figure}[H]
    \centering
\includegraphics[width=0.5\textwidth]{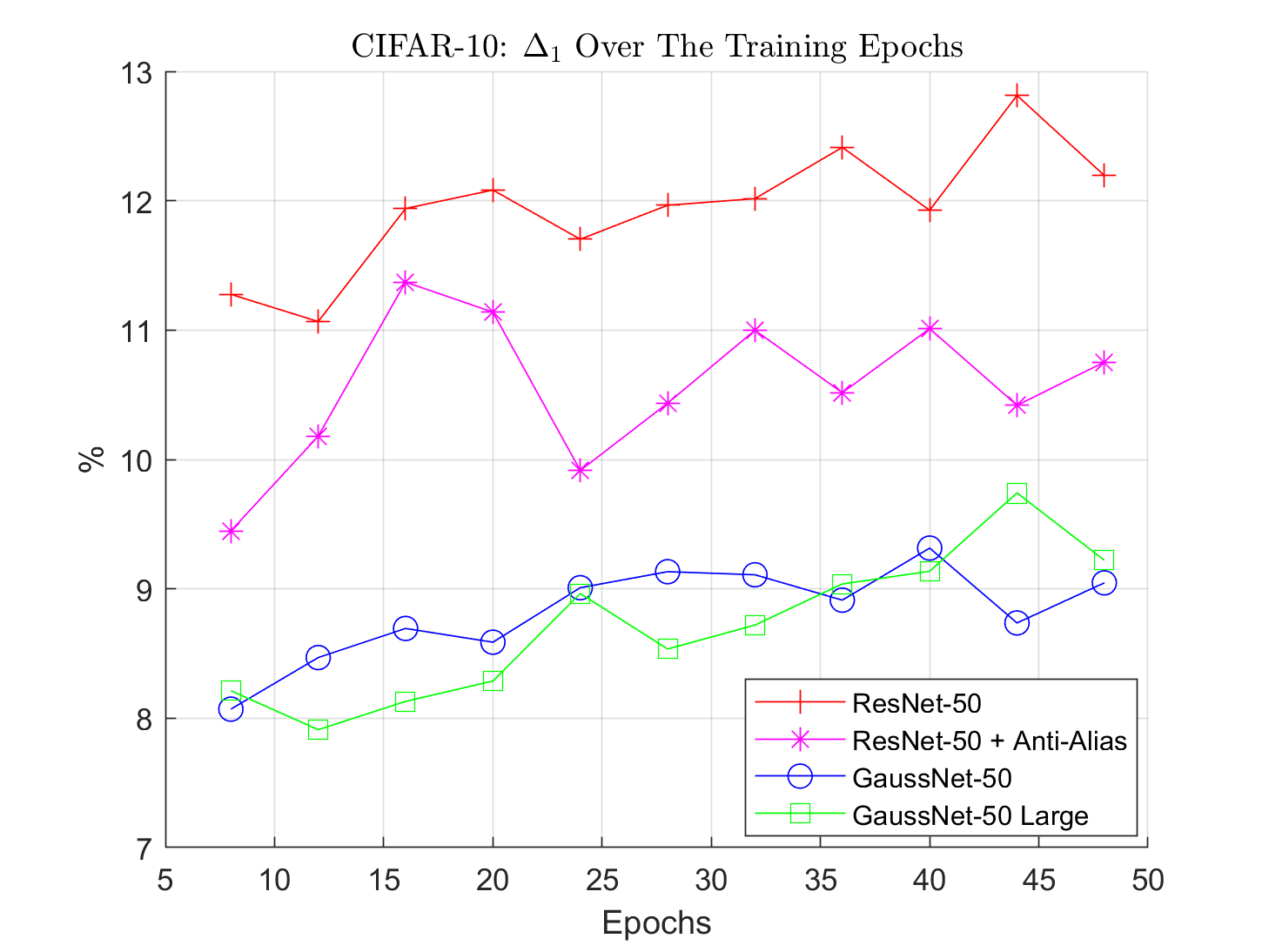}%
\includegraphics[width=0.5\textwidth]{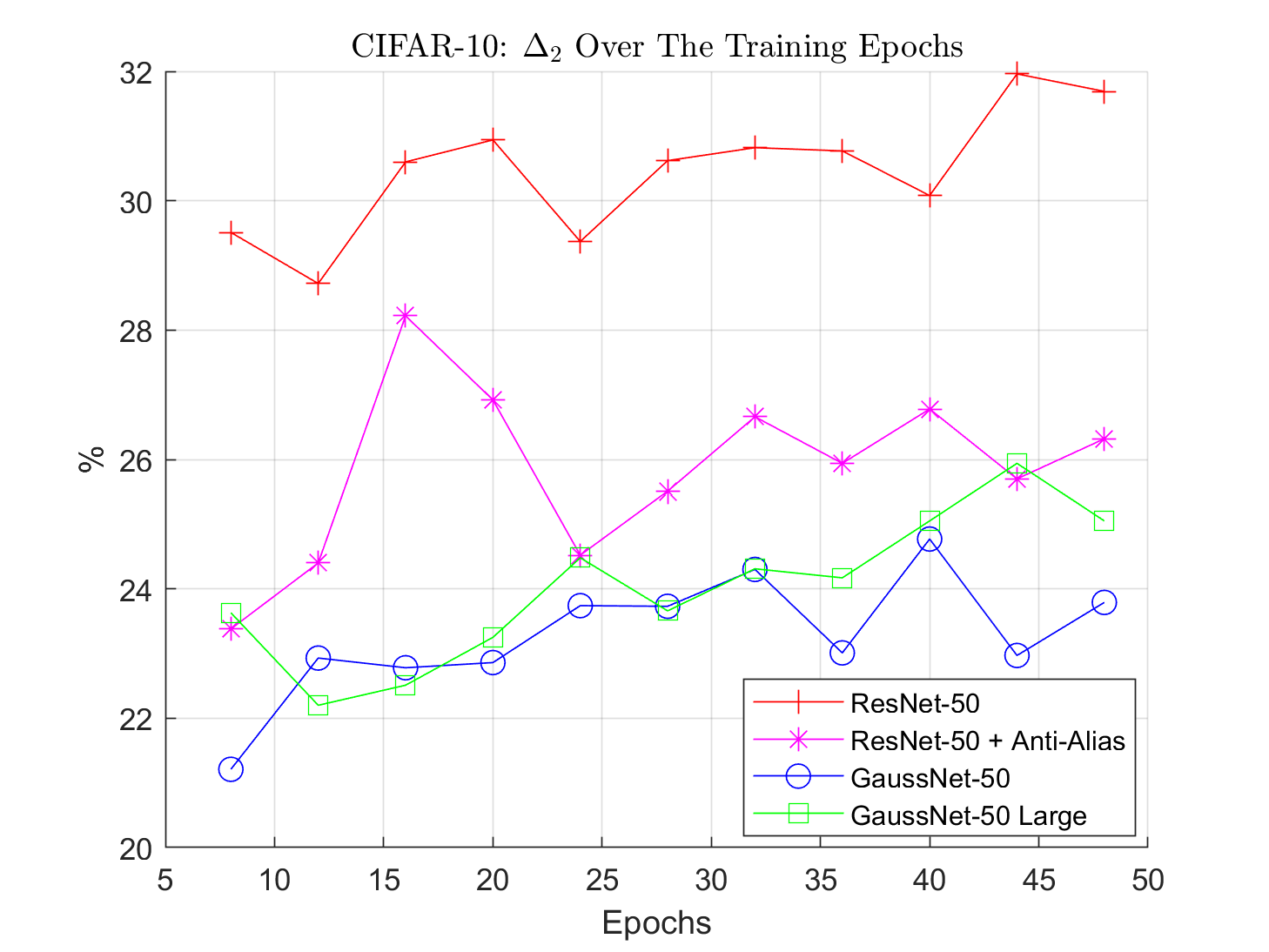}
\caption{CIFAR-10: Sensitivities $\Delta_1$ (left) and $\Delta_2$ (right) versus Epoch for *-50 Architectures. Lower is better.}
\label{fig:31}
\end{figure}

\begin{figure}[H]
    \centering
\includegraphics[width=0.5\textwidth]{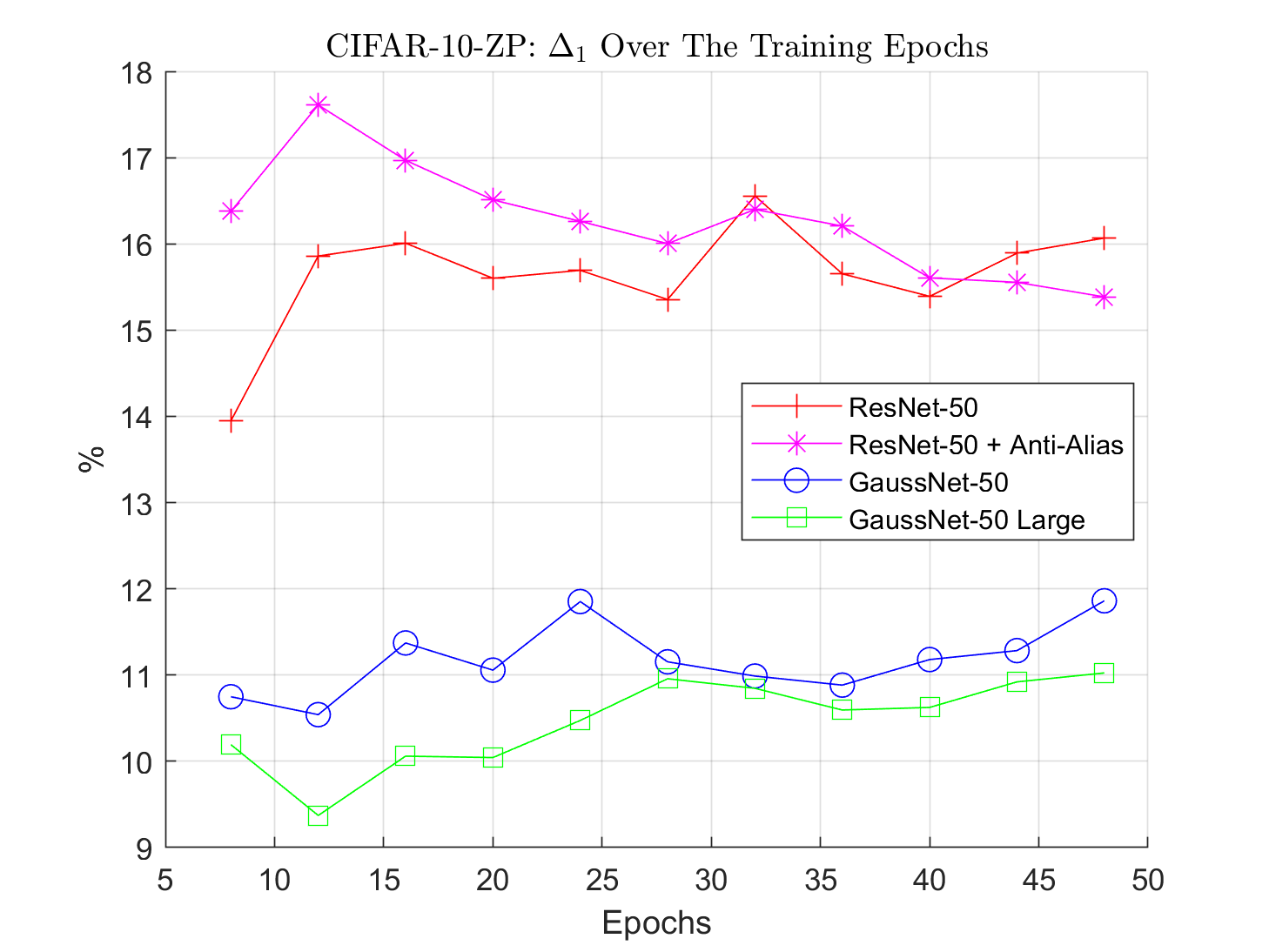}%
\includegraphics[width=0.5\textwidth]{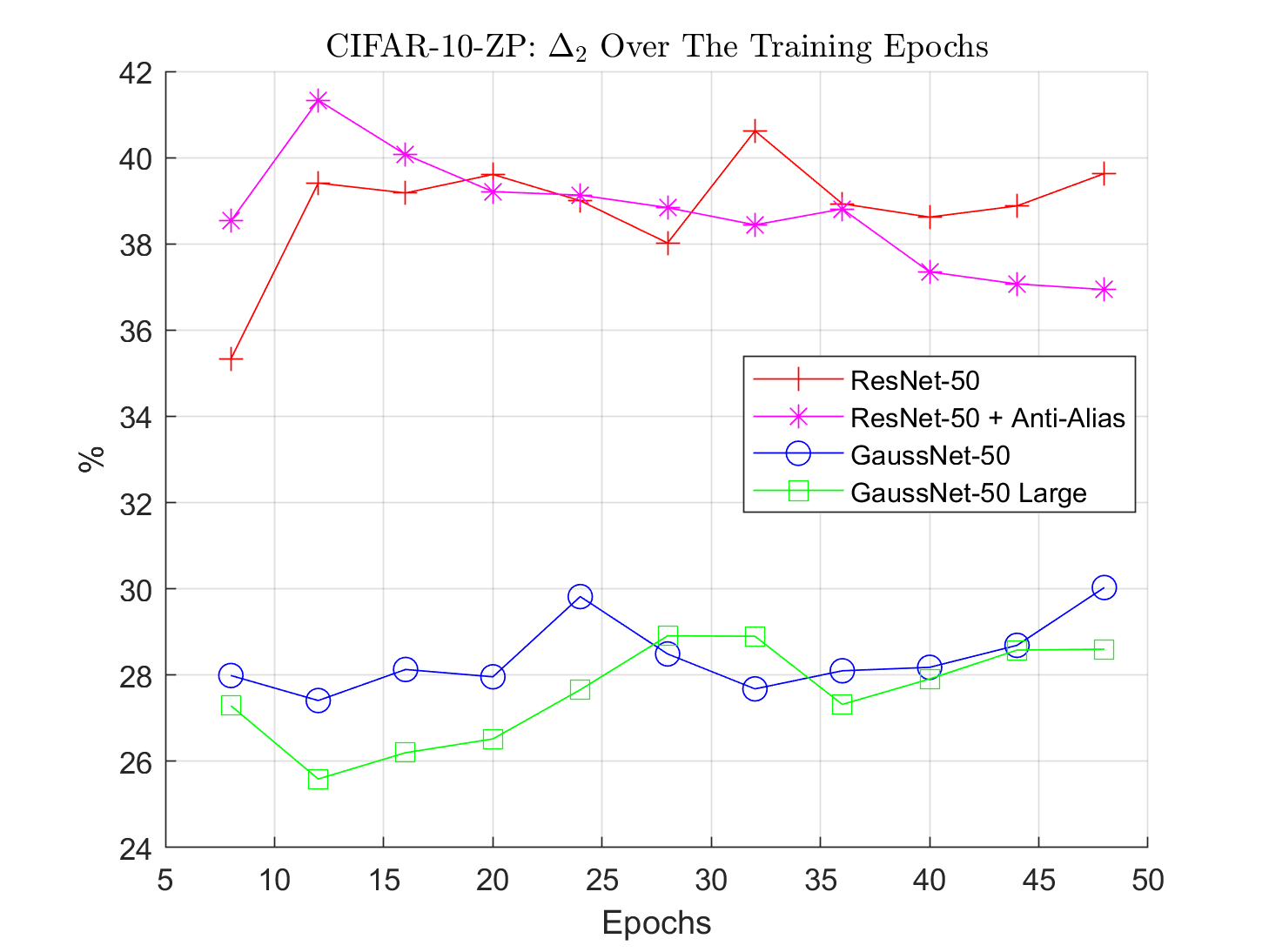}
\caption{CIFAR-10-ZP: Sensitivities $\Delta_1$ (left) and $\Delta_2$ (right) versus Epoch for *-50 Architectures. Lower is better.}
\label{fig:41}
\end{figure}

\subsection{Plots of Test Errors Over Epochs}

In this section, we analyze the test error (on the original test set) over epochs to show that our insensitivity does not come at a price of appreciable test-error loss. We include the test error performance versus epoch of training for  all the combinations of models and test sets from CIFAR-10 and CIFAR-10-ZP. This is shown in Figures~\ref{fig:test1}-\ref{fig:31}. Notice in Figure~\ref{fig:test1}, the architectures based on ResNet-18 all perform around the 20\% error mark, and the ResNet-18-anti-alias seems to give the best test error, though it seems to fluctuate the most over epochs. ResNet-18 is next best in terms of test error, though not much different than the GaussNet-18 architectures. Figure~\ref{fig:31} shows the plot of the test-error for the architectures based on ResNet-50. Notice that all the architectures perform similarly and the differences between the architectures are even less pronounced than the ResNet-18-based architectures.

\begin{figure}[H]
    \centering
\includegraphics[width=0.5\textwidth]{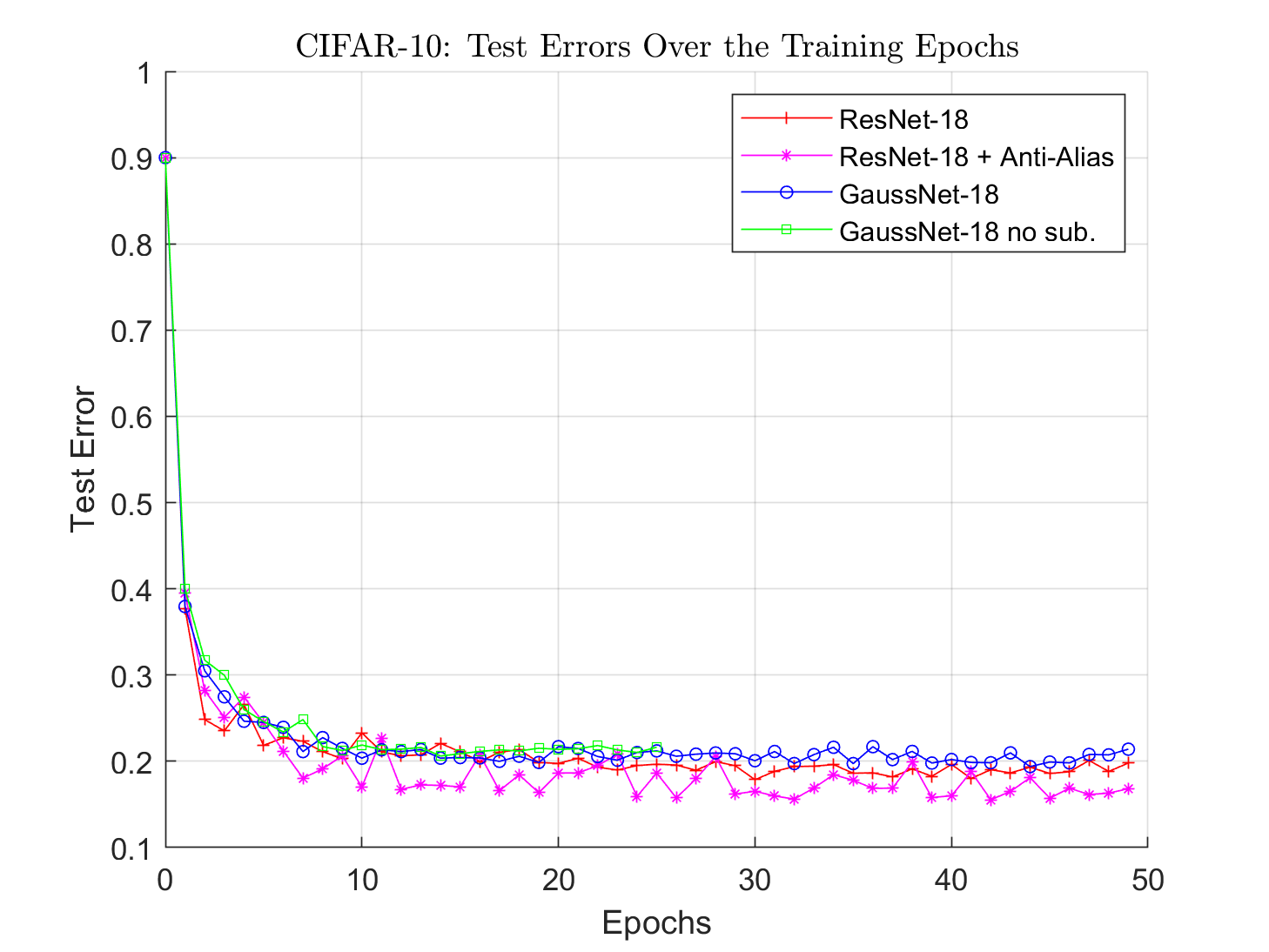}%
\includegraphics[width=0.5\textwidth]{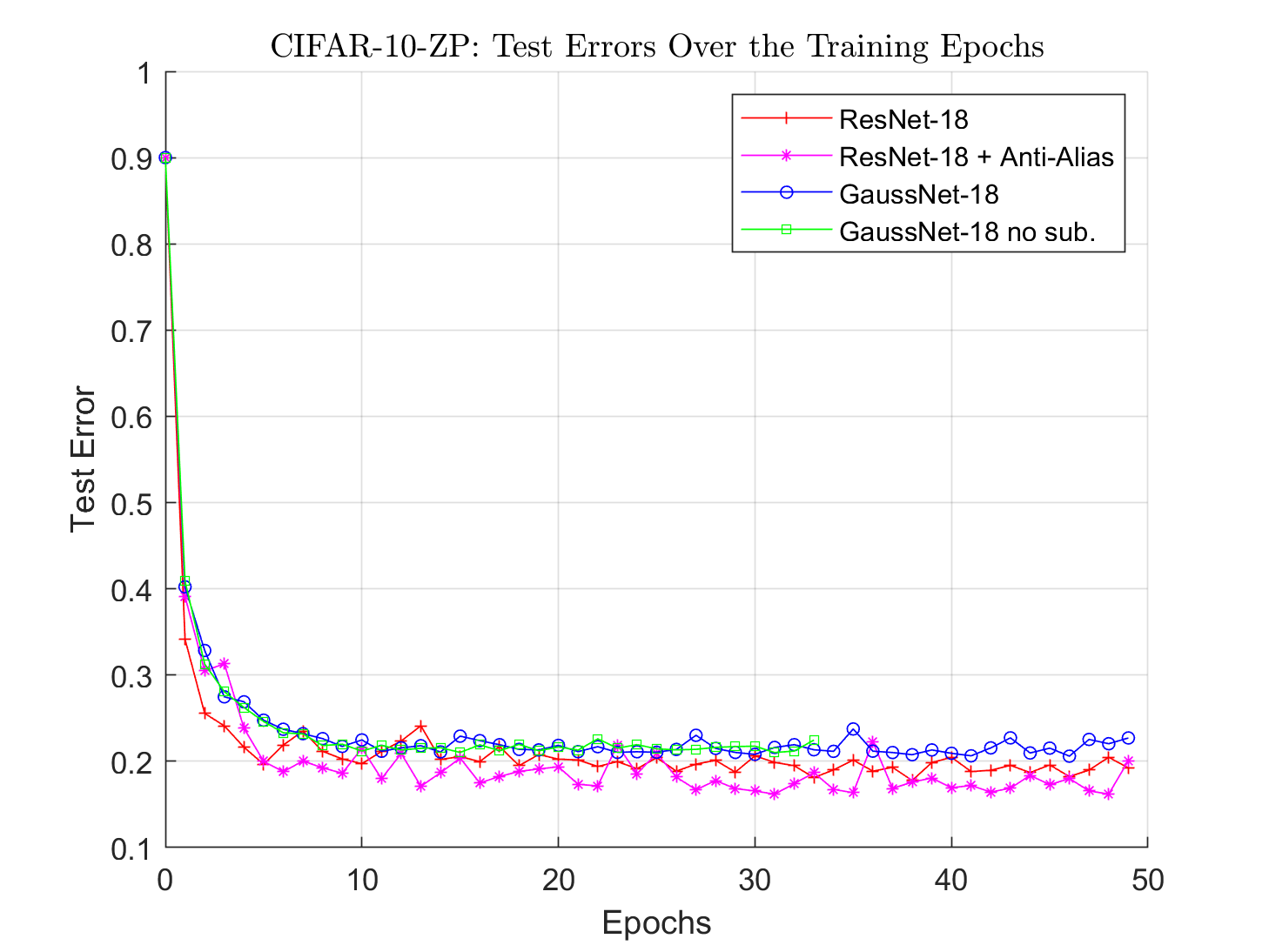}
\caption{Test Errors for *-18 Architectures: CIFAR-10 (left) and CIFAR-10-ZP (right).}
\label{fig:test1}
\end{figure}

%\begin{figure}[H]
%    \centering
%\includegraphics[width=0.5\textwidth]{figs/test2}
%\caption{CIFAR-10-ZP: Test Errors for *-18 Architectures}
%\label{fig:21}
%\end{figure}

\begin{figure}[H]
    \centering
\includegraphics[width=0.5\textwidth]{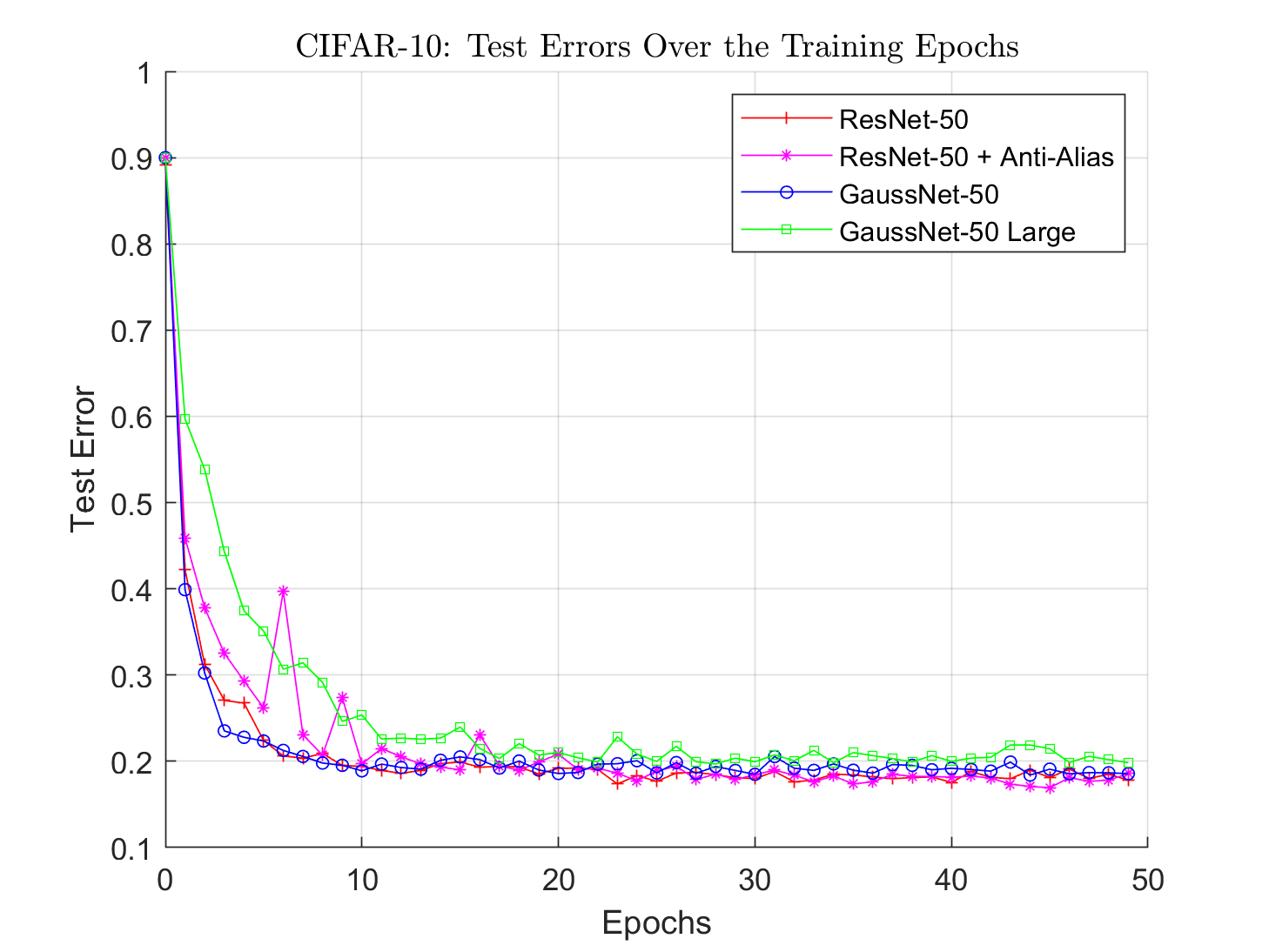}%
\includegraphics[width=0.5\textwidth]{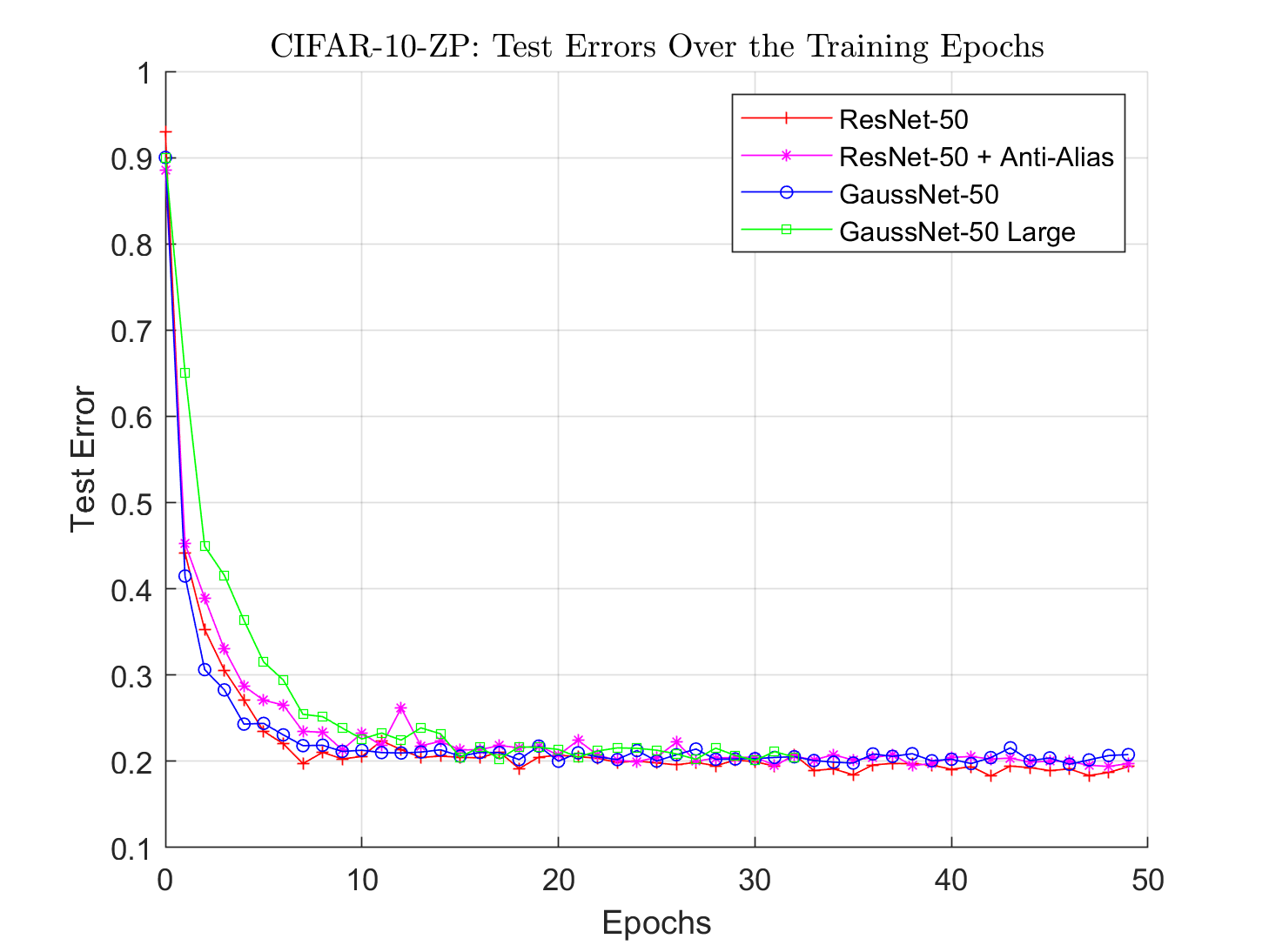}
\caption{Test Errors for *-50 Architectures: CIFAR-10 (left) and CIFAR-10-ZP (right).}
\label{fig:31}
\end{figure}

%\begin{figure}[H]
%    \centering
%\includegraphics[width=0.5\textwidth]{figs/test4}
%\caption{CIFAR-10-ZP: Test Errors for *-50 Architectures}
%\label{fig:41}
%\end{figure}

\subsection{Analysis of Sensitivity as $\sigma$ Varies}

In all of the experiments and the previous plots, we chose $\sigma=0.763$. In practice, this is a hyper-parameter to optimize. Here, we analyze sensitivity as $\sigma$ varies. We chose $\sigma \in \{ 0.3, 0.76, 1.3, 2.3\}$. The plots of sensitivity for GaussNet-50 are shown in Figure~\ref{fig:s12}, which shows the sensitivity for 5 sample epochs. This shows that the sensitivity does vary, but not by much.  Note that for each such $\sigma$, the GaussNet-50 is still more translation insensitive than ResNet-50. For $\sigma$ too small and too large, the sensitivity is large, and thus there seems to be an optimal value in the middle.

\begin{figure}[H]
    \centering
\includegraphics[width=0.5\textwidth]{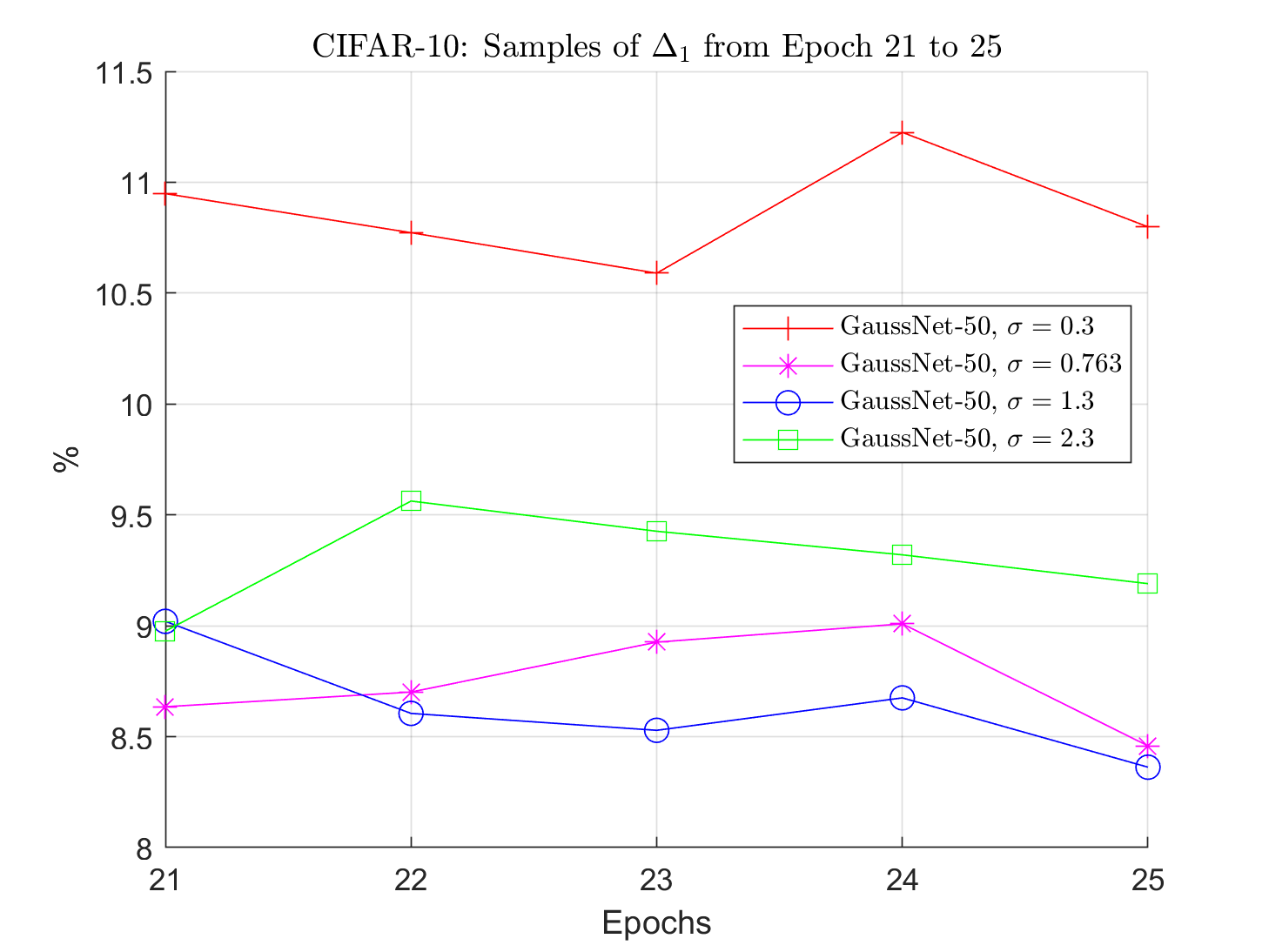}%
\includegraphics[width=0.5\textwidth]{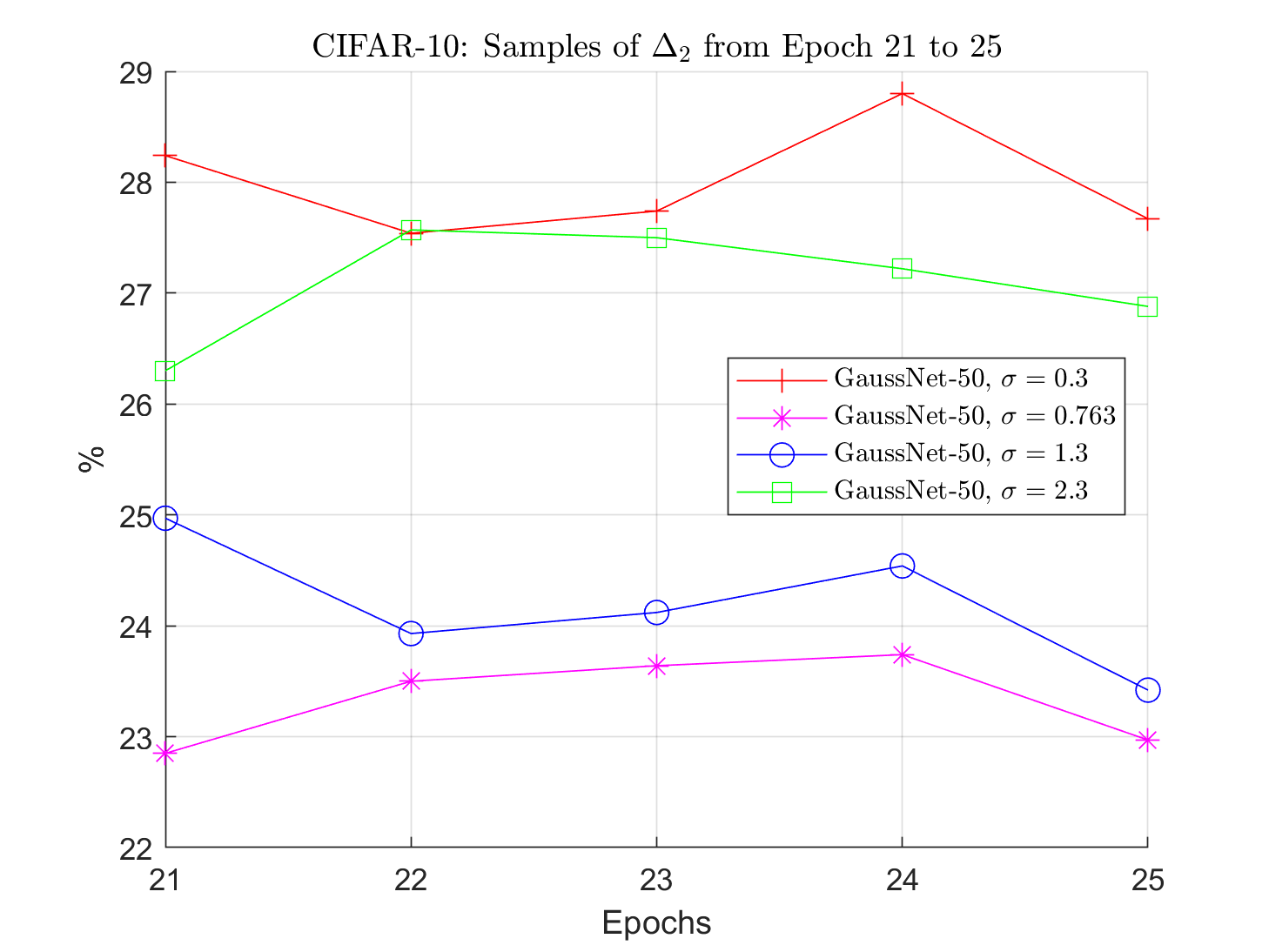}
\caption{CIFAR-10: Sensitivities $\Delta_1$ (left) and $\Delta_2$ (right) versus Epoch for GaussNet-50 for different $\sigma$. Lower is better.}
\label{fig:s12}
\end{figure}

\subsection{Analysis of Inference Times}

In this section, we show the inference times for each of the architectures evaluated in this study.  These times have been recorded on a single NVIDIA GeForce 2080-RTX Max-Q GPU. Tables~\ref{tab:infer01} and~\ref{tab:infer02} shows the absolute
inference times measured in seconds for the entire CIFAR-10 test set as well as the normalized times. 
While it may look surprising that GaussNet-50 is faster than GaussNet-18 in 
inference (and training as well), note that in GaussNet-50, the Conv1x1 layers of ResNet-50 are
not replaced since they are already shift covariant. ResNet-18 in contrast does not have 1x1 convolutions, and thus all its convolution layers are replaced with the Gauss-Hermite approximation.  Despite having a deeper architecture, the actual number
of GaussNet layers in the GaussNet-50 is the same as in GaussNet-18. In addition, on average 
the GaussNet layers within GaussNet-50 are deeper; these deep layers have smaller feature maps due to the earlier sub-samplings, and thus are faster to process. Finally, there is one extra sub-sampling in GaussNet-50 in comparison to GaussNet-18 thus further reducing the average size of input feature maps to the GaussNet layers. 

Thus, the translation insensitivity comes at a moderate price in inference time (GaussNet-50 is slower by a factor of 7.34x) than its corresponding ResNet-50 architecture, and roughly 3x in training time. However, we have not explored optimizing the GaussNet layer computations (e.g., the Gaussian filtering and derivatives) in terms of model parallelization and memory usage, which could lead to gains in speed. Furthermore, we simply did a direct replacement of convolutions in ResNet-50, and we have not explored the most optimized GaussNet architecture, so there are potentially speed-ups with reduced number of layers.

\begin{table}[htp]
\centering
\caption{Inference Times on CIFAR-10 Test Set: *-18 Architectures} 
%\resizebox{0.5\textwidth}{!}{%	
\begin{center}
\begin{tabular}{l|cc} 
Arch/Stats   &   Absolute (s)   &   Normalized \\ \hline
ResNet-18   &   2.64   &   1.00x \\
ResNet-18 + Anti-Alias   &   3.70   &   1.40x \\
GaussNet-18   &   60.16   &   22.77x \\
GaussNet-18 no sub.   &   414.45   &   156.89x
\end{tabular}% }
\end{center}
\label{tab:infer01} 
\end{table}

\begin{table}[htp]
\centering
\caption{Inference Times on CIFAR-10 Test Set: *-50 Architectures} 
%\resizebox{0.5\textwidth}{!}{%	
\begin{center}
\begin{tabular}{l|cc} 
Arch/Stats   &   Absolute (s)   &   Normalized \\ \hline
ResNet-50   &   3.47   &   1.00x \\
ResNet-50 + Anti-Alias   &   4.11   &   1.19x \\
GaussNet-50   &   25.47   &   7.34x \\
GaussNet-50 Large   &   160.79   &   46.33x
\end{tabular} %}
\end{center}
\label{tab:infer02} 
\end{table}

%\section{Appendix}
%\subsection{All Experiments} 
%\input{appendix01}
%\input{appendix02} 

{\small
\bibliographystyle{ieee_fullname}
\bibliography{refs/tim}
}

%\newpage

\end{document}